\newcommand{\wik}{\textsc{Wikipedia}}
\newcommand{\sla}{\textsc{Slashdot}}
\newcommand{\pfc}{\textsc{Pfc}}
\newcommand{\epi}{\textsc{Epinion}}
\newcommand{\ssn}{signed social networks}
\newcommand{\eij}{\ensuremath{i\rightarrow j}}
\newcommand{\eji}{\ensuremath{j\rightarrow i}}
\DeclareMathOperator{\rank}{rank}
\newcommand{\by}{\boldsymbol{y}}
\newcommand{\hdop}{\widehat{d}_{\mathrm{out}}^+} 
\newcommand{\bE}{\mathbb{E}}
\newcommand{\yhat}{\widehat{y}}
\newcommand{\sgn}{\mbox{\sc sgn}}
\newtheorem{theorem}{Theorem}
\newtheorem{fact}{Fact}
\newcommand{\alclog}{\textsc{alclog}}
\newcommand{\alcone}{\textsc{alcone}}
\newcommand{\avgdeg}{\overline{d}}
\newcommand{\blc}{\textsc{blc}}
\newcommand{\dhat}{\widehat{d}}
\newcommand{\epinion}{\textsc{Epinions}}
\newcommand{\lge}{4\ceil*{\log (d_{\mathrm{out}}(i)+1)}}
\newcommand{\lgein}{4\ceil*{\log (d_{\mathrm{in}}(i)+1)}}
\newcommand{\lgeone}{\ceil*{\log (d_{\mathrm{out}}(i)+1)}}
\newcommand{\mhat}{\widehat{m}}
\newcommand{\slashdot}{\textsc{Slashdot}}
\newcommand{\that}{\widehat{t}}
\newcommand{\uhat}{\widehat{u}}
\renewcommand{\Pr}{\mathbb{P}}
\DeclarePairedDelimiter\ceil{\lceil}{\rceil}
\DeclarePairedDelimiter\floor{\lfloor}{\rfloor}
\newcommand{\scO}{\mathcal{O}}
\newcommand{\treestar}[1]{\textsc{Tree}$^\star$}
\newcommand{\comptriads}{16 Triads}
\newcommand{\complowrank}{LowRank}
\newcommand{\compasym}{AsymExp}
\newcommand{\comppp}{Perceptron}
\definecolor{mydarkblue}{rgb}{0,0.08,0.45}
\icmltitlerunning{Active Link Classification in Signed Networks}
\begin{document}
\twocolumn[
\icmltitle{Even Trolls Are Useful: Efficient Link Classification in Signed Networks}

\icmlauthor{Géraud Le Falher }{geraud.le-falher@inria.fr}
\icmlauthor{Fabio Vitale}{fabio.vitale@inria.fr}
\icmladdress{Inria (Magnet Team), Univ. Lille, CNRS UMR 9189 -- CRIStAL,
  F-59000 Lille, France}


\icmlkeywords{social networks, signed graphs, negative links, link classification}

\vskip 0.3in
]

\begin{abstract} 
  We address the problem of classifying the links of signed social networks given their full structural topology. Motivated by a binary user behaviour assumption, which is supported by decades of research in psychology, we develop an efficient and surprisingly simple approach to solve this classification problem. Our methods operate both within the active and batch settings. We demonstrate that the algorithms we developed are extremely fast in both theoretical and practical terms. Within the active setting, we provide a new complexity measure and a rigorous analysis of our methods that hold for arbitrary signed networks. We validate our theoretical claims carrying out a set of experiments on three well known real-world datasets, showing that our methods outperform the competitors while being much faster.

\end{abstract}

\section{Introduction}
\label{sec:intro}

Connections in most social networks are driven by the {\em homophily} assumption, which can be described in the following way: linked individuals tend to be similar, sharing characteristics, attitudes or interests. However, homophily is not sufficient to explain the whole human behaviour in social networks. In fact, sociologists have also studied networks, hereafter called \emph{\ssn{}}, where {\em negative} relationships like dissimilarity, disapproval or distrust are explicitly displayed. Nowadays several online social networks present instances where the nature of a relationship can be negative. For instance, \textsc{Ebay}, where users trust or distrust agents in the network based on their personal interactions, \slashdot{}, where each user can tag another as friend or foes, and \epinion{}, where users can rate positively or negatively not only products, but also other users.

Many social networks are indeed rich in structure, involving heterogeneous data samples related to each other. This kind of data continue to gain attention especially in the online social network domain. One of the reasons underlying this interest is the possibility to classify comments in frequent online debates that occur between two users. Some of these comments can be viewed as negative links between the users involved, even in social networks where connections represent friendships solely.


Consider a user joining an online community. His behaviour will often fall into two cases. On one hand, the new member could play well with other users, establishing positive relationships, for instance with those who have been helpful. On the other hand, the new user could try to disrupt the community by engaging into the so called anti-social behaviour and creating conflictual relationships with other members.
In both cases, such binary behaviour assumption is supported by decades of research in psychology, starting with the seminal work of \citet{Dissonance57} about cognitive dissonance. This feeling of mental discomfort--when one is acting in contraction to one or more personal beliefs, ideas or values--, leads to user behaviour consistency.
A striking example of cognitive dissonance in economics is called \enquote{sunk cost} and describes the
tendency of individuals to knowingly make decisions resulting in a loss in
order to maintain consistency with their past behaviour~\cite{sunkCost85}.
On the relationships side, to justify the dissonance caused by doing something unpleasant, one tend to consider more valuable the causes and the results of his actions. For instance, when a user joins an online social community, he could interact anti-socially and later justify this behaviour by evaluating negatively the users involved in conflictual situations or debates about opposing opinions. In this case, this kind of attitude expressed publicly on social media leads to the definition of a \enquote{troll}:
\enquote{a user whose real intention(s) is/are to cause disruption and/or to trigger or exacerbate
conflict for the purposes of their own amusement.}
\cite{Hardaker10}.\citet{Shachaf10} elaborate on their motive, adding that
\enquote{boredom, attention seeking, and revenge motivate trolls; they regard
  Wikipedia as an entertainment venue, and find pleasure from causing damage to
the community and other people}.



Such signed social networks pose new challenges to the Machine Learning community. On one hand it calls for tailored methods to tackle existing problems, like user clustering, link prediction, targeted advertising, recommendation, prediction of user interests and analysis of the spreading of diseases in epidemiological models.
On the other hand, new problems emerge. Perhaps the most representative of these new problems is the classification of the connection nature that can be either positive or negative, when the whole network topology is known. In several situations, in fact, it is reasonable to assume that the discovery of the nature of a link is more costly than acquiring the topological information of the network. For instance, when two users of an online social network communicate on a public web page, it is immediate to detect the interaction. However, the classification of the interaction nature may require complex techniques. A crucial issue in developing algorithms for this kind of data is scalability, because of the huge amount of networked data. Hence, it is necessary to depart from well-established yet computationally expensive approaches by relying on novel, simple and ad-hoc algorithmic techniques.

In this work, we propose an extremely scalable link classification approach able to achieve a good accuracy while relying on a simple bias motivated by the above mentioned psychological considerations for social networks. More precisely, we consider directed \ssn{} and we posit that users are mostly consistent in their interactions, displaying either a positive or a
negative attitude. Such behaviour is indeed consistent with the cognitive dissonance theory
\cite{Dissonance57}.
This bias stems also from the intuition that most users
are part of the community because they enjoy interacting with some other
members, as shown by their positive relationships, whereas a minority of users is aggressive and obnoxiously looking for conflict, as shown by their outgoing negative links toward other members. As we said, this kind of users are commonly
referred as \emph{trolls} in online communities. This information relies only on edges viewed as outgoing from any given user.
As we show later however, a more accurate classification is obtained by combining the above described user troll feature with  edges viewed as ingoing to each users. More precisely, another user feature playing a crucial role is the pleasantness, since we noticed that in real world online datasets, most of the positive edges are ingoing to a relatively restricted subset of users. Hence, based on the fraction of negative outgoing edges from any user $i$, it is possible to estimate the {\em trollness} of $i$, i.e. to what extent $i$ is a troll user. In a symmetric way, based on the fraction of negative ingoing edges to any user $i$, it is possible to estimate the {\em unpleasantness}\footnote{For the sake of simplicity, in order to operate with two symmetric features, we focus on trollness and unpleasantness instead of trollness and pleasantness. The meaning of the unpleasantness feature can be viewed simply as the opposite of the one of the above mentioned pleasantness.}  of $i$, i.e. to what extent $i$ is an unpleasant user. These estimations are obtained analysing the observed sign of the edges present in the whole network, and provide a fast, accurate and simple approach to solve the Link Classification problem.

We present a set of algorithms based solely on these two features motivated by the related bias. We demonstrate that our algorithms are extremely fast both in theory and in practice, working within the active and batch settings. We provide a new complexity measure derived from our bias together with a rigorous analysis of our method performance within the active setting and we motivate our prediction rule based on the characteristics of the two features used. We validate our theoretical claims by carrying out a set of experiments on three different real-world datasets, showing that our methods perform better than the competitors while being much faster. Moreover, they are easy to implement.

Some advantages of our approach are determined by the fact it exploits features that can be computed locally, irrespective of the structural complexity of the network topology.
This locality have benefits not only in terms of interpretability but also
scalability, allowing us to envision applying our method to very large graphs. More precisely, the time to build our classifier is almost linear in the training set size, while the worst case time per prediction is constant.
Furthermore, this implies that computing the features could be parallelised in a distributed model where the nodes along with their incoming and outgoing edges are partitioned across machines.






\section{Related Work}\label{sec:related}
Interest in signed networks can be traced back to the psychological theory of
structural balance~\cite{cartwright1956structural,heider1958psychology} and
its weak version~\cite{davis1967clustering}.
The rise of online \ssn{} has allowed Computer Science to provide a more
thorough and quantified understanding of that phenomenon. Among several
approaches related to our work, some extend the spectral properties of a
graph to the signed case in order to find good embeddings for classification~\cite{Kunegis2009,SignedEmbedding15}. However, the use of the adjacency matrix
usually requires a total worst case time quadratic in the node set size, which prevent those methods to scale to large
graphs. Likewise, whereas the idea of mining ego networks with SVM provides good
results~\cite{Papaoikonomou2014}, the running time makes this approach often impractical for large real-world datasets.
A more local approach is provided by~\citet{Leskovec2010} with the so
called \enquote{status theory} in directed graphs. Some
works on active classification use a more sophisticated bias based on the Correlation Clustering problem~\cite{TreeStar12,CCCC12} provide strong theoretical guarantees, but the algorithm are necessarily more involved.
  
While we focus on binary prediction, it is possible to consider a weighted version of
the problem, in which case edges measure the amount of trust or distrust
between two users~\cite{guha2004propagation,tang2013exploiting,Bachi2012,Qian2014sn}.
Departing from our setting, link classification can also exploit side information
related to the network. For instance, \citet{EdgeSignsRating15} use the
product purchased on Epinion in conjunction with a neural network,
\citet{TrollDetection15} identify trolls by analysing the textual content of
their post and \citet{SNTransfer13} use SVM with transfer learning from one
network to another. While these approaches have interesting performance, they
require both extra information and time processing which prevent them to be as scalable as ours.

\section{Preliminaries and basic notation}\label{s:prel}

We focus on directed graphs without side information, where each edge can be
either positive or negative and is associated with a binary label in
$\{-1,+1\}$. Given a directed graph $G(V,E)$, where $V$ is the vertex set and
$E$ is the edge set, we denote the set of edges (ingoing and outgoing) incident
to a node $i \in V$ by $E(i)$. We also denote the subset of $E(i)$ formed by
the outgoing positive, outgoing negative, ingoing positive and ingoing negative
edges respectively by $E^+_{\mathrm{out}}(i)$, $E^-_{\mathrm{out}}(i)$, $E^+_{\mathrm{in}}(i)$ and
$E^-_{\mathrm{in}}(i)$. We define $E_{\mathrm{out}}(i) \triangleq E^+_{\mathrm{out}}(i) \cup E^-_{\mathrm{out}}(i)$
and $E_{\mathrm{in}}(i) \triangleq E^+_{\mathrm{in}}(i) \cup E^-_{\mathrm{in}}(i)$. 

Given any node $i \in V$, we define $d_{\mathrm{out}}^+(i) \triangleq \left|
E^+_{\mathrm{out}}(i) \right|$, which is therefore the number of outgoing positive edges
from $i$. Likewise, we define $d_{\mathrm{out}}^-(i) \triangleq \left| E^-_{\mathrm{out}}(i)
\right|$, $d_{\mathrm{in}}^+(i)\triangleq \left| E^+_{\mathrm{in}}(i) \right|$ and $d_{\mathrm{in}}^-(i)
\triangleq \left| E^-_{\mathrm{in}}(i) \right|$. We also define $d_{\mathrm{out}}(i) \triangleq
\left| E_{\mathrm{out}}(i) \right|$ and $d_{\mathrm{in}}(i) \triangleq \left| E_{\mathrm{in}}(i) \right|$.

A directed edge from node $i$ to node $j$ is represented by $i \rightarrow j$.
We use a binary vector $\by \in \{-1,+1\}^{|V|}$ to represent the labeling of each
edge $i \rightarrow j$. Hence, the label of edge $i \rightarrow j$ is denoted
by $y_{i \rightarrow j}$. If $y_{i \rightarrow j} = 1$ ($y_{i \rightarrow j} =
-1$), then node $i$ is providing a positive (negative) {\em signal} to node
$j$. We also define $y^{\mathrm{min}}_{\mathrm{out}}(i)$ and $y^{\mathrm{min}}_{\mathrm{in}}(i)$ as the
least\footnote{Ties are broken arbitrarily.} used label in $E_{\mathrm{out}}(i)$ and
$E_{\mathrm{in}}(i)$ respectively.

Finally, the {\em trollness} $t(i)$ of $i$ and his {\em unpleasantness} $u(i)$
are defined respectively as\footnote{In the special case $d_{\mathrm{out}}(i)=0$
($d_{\mathrm{in}}(i)=0$), we set $t(i)=\frac{1}{2}$ $\left(u(i)=\frac{1}{2}\right)$.}
$$t(i) \triangleq \frac{d_{\mathrm{out}}^-(i)}{d_{\mathrm{out}}(i)}\ \ \ \ \ \ \ \ \ \ \ \ \ \ \
\ u(i) \triangleq \frac{d_{\mathrm{in}}^-(i)}{d_{\mathrm{in}}(i)}$$ 

Trollness and unpleasantness are therefore two features associated with each
node $i \in V$ considering respectively only outgoing and ingoing edge labels.  

We use the \enquote{hat} symbol, together with the above described notation, to
identify the mentioned sets and quantities related to the subset of {\em observed} edges
solely. For example $\hdop(i)$ is the number of observed positive labels
associated with the edges of $E_{\mathrm{out}}(i)$.

\smallskip

We study the problem of transductive link classification in directed signed graphs, focusing especially on the active setting. However, the simple prediction rules we designed, directly derived from our bias, can be used even within the batch setting. The whole input graph topology is always completely known to the learner. 

We briefly recall the batch and active settings. In the batch setting, a subset of
the whole label set (the training set) is provided to the learner. The training set information is exploited to predict the remaining labels (test set). In the active setting, the learner is allowed to select the training set. The training set selection can be performed in such a way to minimise the number of prediction mistakes on the remaining labels\footnote{Our active setting is {\em non-adaptive}, i.e. the
learner is allowed to observe the labels of the training set {\em only after} the selection phase. The selection phase is therefore not driven by the revelation of the
labels queried (even if the labels can selected incrementally).}. 

\section{Link classification using trollness and unpleasantness}

Our transductive learning problem consists in the classification of the labels
of a subset of the edges of an input directed signed graph $G(V,E)$, given the whole
topology of $G$ when a subset of the edge labels is revealed. Since the input
graph has no side information, an algorithm for this problem can only exploit
the observed labels and the network topology. We design mainly two prediction rules based on only two node features which are defined in terms of the edge labels and the graph local topology. These rules are extremely simple yet very effective and attempt to estimate the value of the two node features we consider for all nodes in $V$. As we will explain in details in this section, the features associated with the nodes can also be viewed as features of the incident edges, taking into account the direction of each edge. 
Our prediction rules can be used within the transductive active and batch settings. We focus especially on the active setting, since the selection phase offers opportunities for
simple and principled methods to obtain a very good estimation
of these values. 

We consider signed social networks, like \sla{}, where users can tag other
users as friends or foes, \epi{} where users can rate other users positively or
negatively and \wik{}, where admins can vote for or against
the promotion of another admin. Our
approach for the link classification problem is based on the following
simple consideration: we notice that most of
the negative edges are outgoing from a restricted number of users. The labels of
the edges outgoing from these users are often negative. This suggests us--%
in accordance with psychological evidences of a consistent user behaviour
preventing cognitive dissonance--%
that a feature playing a crucial role in our prediction problem is the ratio
$\frac{d_{\mathrm{out}}^-(i)}{d_{\mathrm{out}}(i)}$, defined in Section~\ref{s:prel} as
the {\em trollness} $t(i)$ of any given node $i$. In a symmetric way, we notice
that usually the difference between the number of positive and negative labels
of the edges incoming in a vertex $i$ is always quite big with respect to the total number of edges incoming to $i$. Hence we considered a second feature defined as the ratio $\frac{d_{\mathrm{in}}^-(i)}{d_{\mathrm{in}}(i)}$ that we call {\em unpleasantness} $u(i)$ of $i$. Interestingly, only one of these two feature is sufficient to provide enough information to design a prediction rule extremely simple yet effective. The combination of the two features can be exploited even more efficiently. 

We then design a main simple method and a selection strategy for the active
setting, that turns out to be extremely fast, very effective in practice and
easy to implement. The results are easy to interpret and for the active setting, in order to provide insight into the
use of these basic features, we analysed in details one of our methods (and give proofs in \autoref{ssec:proofs}).
Finally, these methods are very local, which
entails they can be even easily parallelizable. 

\subsection{Predicting using only one feature}\label{s:one_feature}

The first step of our method using only the trollness feature is go through the
training set and assign
to each node $i \in V$ the value $\that(i)$ equal to the ratio
$\frac{\dhat_{\mathrm{out}}^-(i)}{\dhat_{\mathrm{out}}(i)}$ between the number of observed
negative labels and the total number of observed labels (positive and negative)
assigned to the edges outgoing from $i$. Analogously, using the unpleasantness
feature we assign the value
$\uhat(i)\triangleq\frac{\dhat_{\mathrm{in}}^-(i)}{\dhat_{\mathrm{in}}(i)}$ with each node $i \in V$. 

In order to motivate the choice of our simple prediction rule described below, we now provide a key fact about the optimal threshold maximizing the number of training edge corrected classified using only one feature. 


\begin{fact}\label{t:thresold_one_dimension} Given any training set for any labeled signed directed graph, the threshold $\frac{1}{2}$ maximises the number of  training edges classified with the correct labels using only the trollness or the unpleasantness feature.
\end{fact}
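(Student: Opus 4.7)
The plan is to decompose the count of correctly classified training edges node by node, reducing the statement to a collection of independent majority-vote problems. This decomposition is possible because both features under consideration are per-node quantities.

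First, I would fix the trollness case (the unpleasantness case is symmetric). Since $\that(i)$ depends only on the source node $i$, a threshold rule with threshold $\tau$ predicts the same sign on every observed outgoing edge from $i$: namely $-1$ whenever $\that(i) > \tau$, and $+1$ otherwise. Consequently the total number of correctly classified training edges splits as a sum over nodes, and the contribution of node $i$ is either $\dhat_{\outg}^-(i)$ (when the rule predicts $-1$ for $i$'s outgoing edges) or $\dhat_{\outg}^+(i) \triangleq \dhat_{\outg}(i) - \dhat_{\outg}^-(i)$ (when it predicts $+1$).

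Second, I would observe that the best per-node contribution achievable by \emph{any} threshold is $\max\bigl(\dhat_{\outg}^-(i),\dhat_{\outg}^+(i)\bigr)$, attained precisely by predicting the majority observed label at $i$. By definition $\dhat_{\outg}^-(i) > \dhat_{\outg}^+(i)$ iff $\that(i) > \tfrac12$, and the two counts are equal iff $\that(i) = \tfrac12$, so the threshold $\tau = \tfrac12$ implements the majority rule at every node simultaneously. Summing over $i$, the rule with threshold $\tfrac12$ therefore attains the per-node maxima simultaneously, hence the overall maximum. For any other $\tau' \neq \tfrac12$, a node with $\that(i)$ strictly between $\tfrac12$ and $\tau'$ would be classified by the minority label and contribute strictly less, while all other nodes contribute the same, so $\tau'$ cannot beat $\tfrac12$.

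The argument for unpleasantness is identical after swapping outgoing for incoming edges and grouping training edges by their target node. The main (and only) step to identify is the per-node decomposition; once it is in place the statement collapses to the elementary fact that majority voting minimizes 0--1 training error among constant predictors on a binary sample, so I do not expect any real obstacle.
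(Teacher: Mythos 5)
Your proof is correct, and it takes a genuinely different route from the paper's. The paper sorts the nodes by their estimated trollness $\that(i_1)\le\dots\le\that(i_z)$, treats the training error $\mhat(T)$ as a piecewise-constant function of the threshold $T$, computes the jump at each $\that(i_k)$ as $\dhat_{\mathrm{out}}^-(i_k)-\dhat_{\mathrm{out}}^+(i_k)$, and argues that all decreases precede all increases (since a decrease forces $\that(i_k)<\tfrac12$ and an increase forces $\that(i_k)>\tfrac12$), so that $\mhat$ is unimodal with its minimum at $T=\tfrac12$. You instead decompose the number of correct classifications node by node and observe that the $\tfrac12$-threshold simultaneously attains the per-node optimum $\max\bigl(\dhat_{\mathrm{out}}^-(i),\dhat_{\mathrm{out}}^+(i)\bigr)$, which upper-bounds what any threshold (indeed, any rule that is constant on each node's outgoing edges) can achieve. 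Your pointwise-domination argument is arguably cleaner and proves something slightly stronger, namely optimality of the majority rule within the larger class of all per-node-constant predictors rather than only threshold rules; the paper's argument, in exchange, yields extra structural information about the shape of the training error as a function of the threshold (monotone decrease followed by monotone increase), which says something about the robustness of the choice $T=\tfrac12$. One cosmetic remark: your final sentence about a strict loss for $\tau'\neq\tfrac12$ is not needed for the stated fact (which claims maximization, not unique maximization) and implicitly assumes a node with $\that(i)$ strictly between $\tfrac12$ and $\tau'$ exists, but since the preceding domination argument already closes the proof, this does not create a gap.
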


Hence, our simplest prediction rule based on a single feature is\footnote{Ties are broken arbitrarily, i.e., whenever $\that(i)$ or
$\uhat(j)$ are equal to $\frac{1}{2}$, label $y_{i \rightarrow j}$ can be
predicted with a label arbitrarily chosen in preliminary phase.}\vspace{-0.6\baselineskip} $$\yhat_{i \rightarrow
j}=\sgn\left(\frac{1}{2} - \that(i) \right)\quad\quad
\yhat_{i \rightarrow j} = \sgn\left(\frac{1}{2} - \uhat(j)\right)$$ 
using trollness and unpleasantness respectively. 

We predict therefore $\yhat_{i \rightarrow j}=1$ with the
majority of the labels of the outgoing edges from $i$ or ingoing ones to $j$ simply with a majority vote rule\footnote{Ties are broken arbitrarily.}, exploiting locally the estimated trollness or unpleasantness of each node $i$ or $j$. We call $\blc(t)$ (Batch Link
Classifier) and $\blc(u)$ the algorithms using this prediction rule within the
batch setting, respectively using the trollness or unpleasantness solely. Finally, when a single feature is used, it can be seen as associated with node $i$ as well as with any edge respectively outgoing from $i$ (using~$t(i)$) or ingoing to $i$ (using~$u(i)$). 

\subsection{Predicting combining trollness and
unpleasantness}\label{s:two_features}

When using both trollness and unpleasantness, each edge $i \rightarrow j$ can
be viewed as a point with coordinates $\that_{\mathrm{out}}(i)$ and $\uhat_{\mathrm{in}}(j)$ in
the compact set $[0,1]\times[0,1]$. A natural extension of the prediction rule
using a single feature explained in \autoref{s:one_feature} could be the
following\footnote{Ties are broken arbitrarily in this case too, i.e. whenever
$\that_{\mathrm{out}}(i)+\uhat_{\mathrm{in}}(j)-1$ is equal to $0$, label $y_{i \rightarrow j}$
can be predicted with a label arbitrarily chosen in a preliminary phase.}:
$$\yhat_{i \rightarrow j}=\sgn\left(\that_{\mathrm{out}}(i)+\uhat_{\mathrm{in}}(j)-1\right) $$
which is interpreted  saying that a label is predicted $-1$ if and only if the estimated trollness $\that_{\mathrm{out}}(i)$ of node $i$ is larger than
the estimated \enquote{pleasantness} $(1-\uhat_{\mathrm{in}}(j))$ of node $j$.
This is equivalent to using linear separator in the plane with
coordinates $\that_{\mathrm{out}}(i)$ and $\uhat_{\mathrm{in}}(j)$. This simple prediction rule
takes into account the symmetry of the two features, but it is not guaranteed
to be the optimal linear separator for the training set in the compact set
$[0,1]\times[0,1]$.
Exploiting these two features for several datasets, linear classifiers such as
the perceptron algorithm return a linear separator whose equation is very close to
$\that_{\mathrm{out}}(i)+\uhat_{\mathrm{in}}(j)-k=0$ for a certain value of $k$ depending on the training set. This suggest that a good linear separator using these two features is a line parallel to $\that_{\mathrm{out}}(i)+\uhat_{\mathrm{in}}(j)-1=0$, and that one parameter ($k$ in $\that_{\mathrm{out}}(i)+\uhat_{\mathrm{in}}(j)-k=0$) needs to be learned.
Indeed, using the rule
$$\yhat_{i \rightarrow j}=\sgn\left(\that_{\mathrm{out}}(i)+\uhat_{\mathrm{in}}(j)-k^*\right)$$
where $k^*$ is the optimal\footnote{We refer to the separator maximising the number of training edges classified in a correct way.} $k$'s value for separating the training points on the
plane with a line expressed by the equation $\that_{\mathrm{out}}(i)+\uhat_{\mathrm{in}}(j)-k=0$, we are able to obtain very good results while the resulting algorithm is extremely simple and fast.
More precisely finding $k^*$ requires a time equal to
$\scO(|E_{\mathrm{train}}|\log|E_{\mathrm{train}}|)$ using the following method (we recall that
each point on the compact set $[0,1]\times[0,1]$ corresponds to a training
edge): 
\begin{enumerate}[nosep,leftmargin=*]
\item For each training label $y_{i \rightarrow j}$ we
    compute the distance between the  line defined by the equation
    $\that_{\mathrm{out}}(i)+\uhat_{\mathrm{in}}(j)=0$ and the point representing edge $i
  \rightarrow j$.  
  \item We create a sequence $S$ containing all the training
  points sorted by the distance computed in the previous step.   
  \item We scan the whole sorted sequence $S$ to find the optimal separator $S_{(i\rightarrow j)^*}$ for the given training set and we finally set
  $k^*=\frac{\sqrt{2}}{2}S_{(i\rightarrow j)^*}$.  
  \end{enumerate}

Hence $\that_{\mathrm{out}}(i)+\uhat_{\mathrm{in}}(j)-k^*=0$ is the optimal separator line parallel to $\that_{\mathrm{out}}(i)+\uhat_{\mathrm{in}}(j)-1=0$ for the
given training set. In
\autoref{f:figexpl} we illustrate this simple algorithm. We call this method $\blc(t,u)$.

\begin{figure}[t!]\centering
\includegraphics[width=50mm]{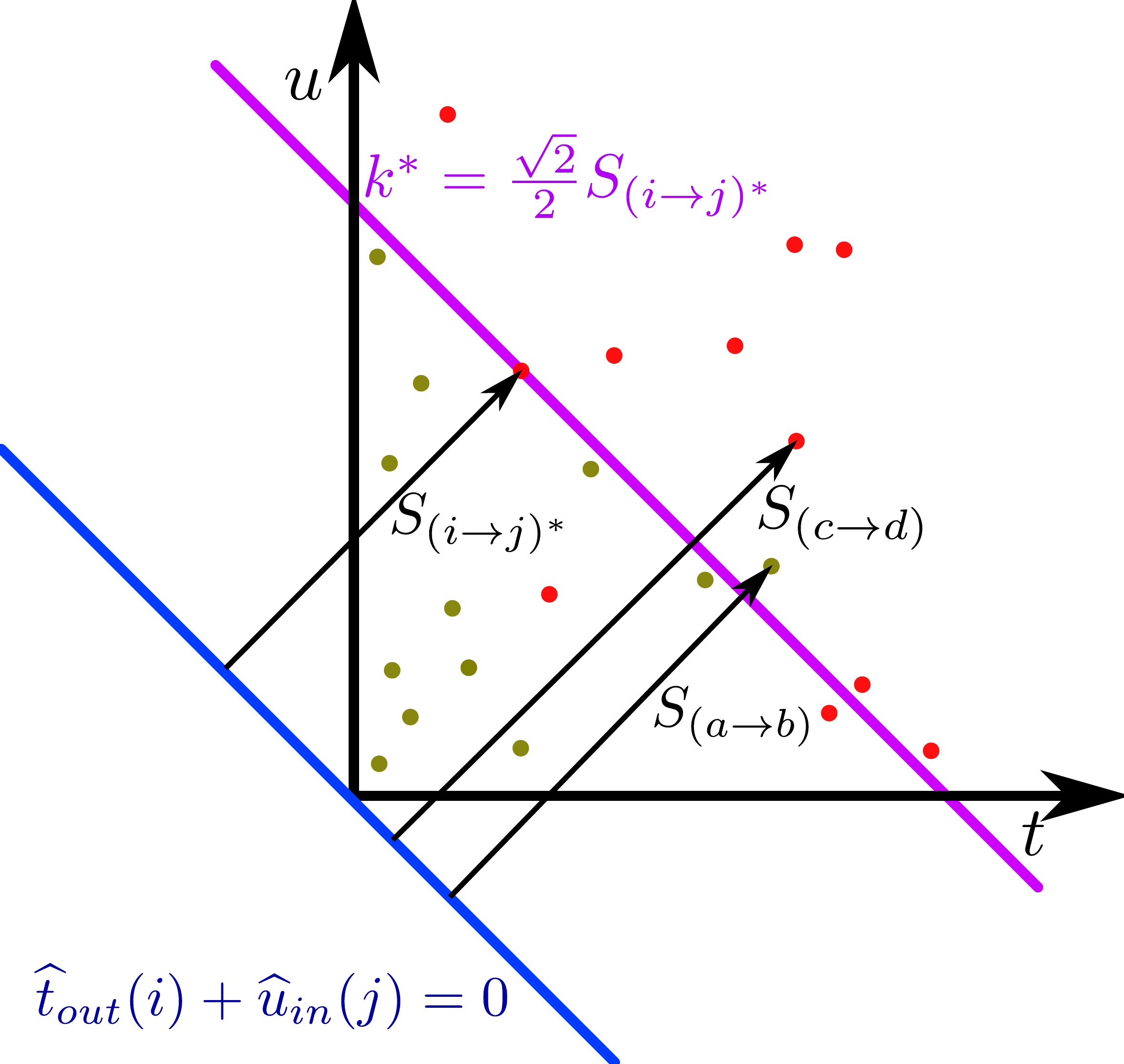}
\caption{Combining trollness and unpleasantness to find the best linear separator parallel to the line defined by the equation $\that_{\mathrm{out}}(i)+\uhat_{\mathrm{in}}(j)-1=0$ (or, equivalently, $\that_{\mathrm{out}}(i)+\uhat_{\mathrm{in}}(j)=0$).\label{f:figexpl}}
\vspace{-0.7cm}
\end{figure}

\subsection{Active learning algorithms}

Since the estimation of the true trollness $t(i)$ and unpleasantness $u(i)$ for
each node $i \in V$ plays a crucial role in these prediction rules,
we devise two extremely fast and simple selection strategy and we analyse
them in details in \autoref{s:analysis}: $\alcone$ and $\alclog$ (Active Link
Classifier using,  to estimate each node feature, {\em one} label or a {\em logarithmic} number of labels). Both strategies are very scalable in that they
require only a worst case constant time per query. Analogously to the $\blc$
algorithm, $\alcone(t)$ and $\alclog(t)$ use only the trollness feature whereas
$\alcone(u)$ and $\alclog(u)$ use only the unpleasantness feature. $\alcone(t,u)$ and $\alclog(t, u)$ use instead both the feature. 

The prediction rule is all cases is the same described in
\autoref{s:one_feature} using only one feature and in \autoref{s:two_features}
using two features. 

In the selection phase, for all node $i \in V$, $\alcone(t)$ and $\alcone(u)$
pick one edge uniformly at random in $E_{\mathrm{out}}(i)$ or $E_{\mathrm{in}}(i)$ respectively (if it is not an empty set) and
query its label. This query will be then used to predict all the other labels
of the edges in $E_{\mathrm{out}}(i)$ or $E_{\mathrm{in}}(i)$ respectively. $\alclog(t)$ and
$\alclog(u)$ work in an analogous way, selecting uniformly at random with
replacement $\lge$ and $\lgein$ edges respectively, and asking for their labels. This
can provide, as demonstrated in \autoref{s:analysis}, better guarantees on
the number of mistakes occurring during the prediction phase. 

Finally, for each node $i \in V$, $\alcone(t, u)$ picks one edge uniformly at
random both from $E_{\mathrm{out}}(i)$ and in $E_{\mathrm{in}}(i)$ (if they are not empty sets). $\alclog(t, u)$ operates analogously, picking uniformly at random with replacement $\lge$ edges from $E_{\mathrm{out}}(i)$ and $\lgein$ from $E_{\mathrm{in}}(i)$. Both $\alcone(t, u)$ and $\alclog(t, u)$, in the last step of the selection phase, query the labels of all the selected edges.

\section{Link classification analysis}\label{s:analysis}

\subsection{Complexity measures} \label{sub:complexity_measures}

We begin defining the complexity measures that we use to characterise the
labeling for this problem. We basically have three complexity measures:
$\Psi_{\mathrm{out}}$, $\Psi_{\mathrm{in}}$ and $\Psi_{t,u}$, which refer respectively to the
trollness, the unpleasantness and to the combined use of both the features.
$\Psi_{\mathrm{out}}$ is defined as the sum over all nodes $i$ of the number of outgoing edges labeled with the least used labels in $E_{\mathrm{out}}(i)$. The definition of $\Psi_{\mathrm{in}}$ is equal to the one of $\Psi_{\mathrm{out}}$ except for the fact of considering only the ingoing edges, i.e. the unpleasantness feature.
$\Psi_{t,u}$ takes into account both the features
but we will
not enter into the details
since focusing on $\Psi_{\mathrm{out}}$ and $\Psi_{\mathrm{in}}$ provides the most significant information about the theoretical motivations underlying our learning approach. Finally, since these measures depend on the labeling, in some cases we will make explicit this dependence using notations like $\Psi_{\mathrm{out}}(\by)$. 

We will now give an analysis of our simplest decision rule, based on a single
feature and the natural threshold $\frac{1}{2}$, proven to be optimal for separating the training set in \autoref{s:one_feature}. Hence, a node $i$ is considered a troll if the majority of the outgoing edges from $i$ are negative, and unpleasant if the majority associated with the incoming edges to $i$ are negative\footnote{Ties are broken arbitrarily, i.e., for any given node $i$,
if $d_{\mathrm{out}}^-(i) = d_{\mathrm{out}}^+(i)$, then $i$ can be
equivalently considered a troll or a non-troll node. If $d_{\mathrm{in}}^-(i) = d_{\mathrm{in}}^+(i)$, then $i$ can be equivalently considered unpleasant or pleasant node.}. 

For the sake of simplicity, hereafter in this section, we will use $\Psi$ instead of $\Psi_{\mathrm{out}}$. The
label irregularity measure $\Psi$ can be defined formally as:
$$\Psi \triangleq \sum_{i \in V} \min\bigl(d^+_{\mathrm{out}}(i),
d^-_{\mathrm{out}}(i)\bigr)$$
We also define the labeling irregularity contribution of each node $i$ as
$\Psi(i) \triangleq  \min\bigl(d^+_{\mathrm{out}}(i), d^-_{\mathrm{out}}(i)\bigr)$, which measures to what extent the behaviour of a node is self-consistent
according to our bias, i.e. how many positive edges are outgoing from a troll
node and how many negative edges are outgoing from a non-troll node.

\subsection{Active setting}

The goal of this analysis is to provide insight into the use of these basic
features within the active setting and, at the same time, to show the solidity and the reliability of our approach from a theoretical perspective. Since the analysis for the trollness and unpleasantness are basically equal, without loss of generality we will focus on the trollness feature and the complexity measure
$\Psi_{\mathrm{out}}$. Moreover, given any algorithm $A$, we denote the number of mistakes
made by $A$ by $m_A$.

In the active setting, we assume that all labels are set {\em before} the
algorithm selection phase. Hence, according to our bias, in the selection phase
the learner needs at least one edge from each node $i$ in order to predict the
labels of the edges outgoing from $i$. We provide now an interesting and general lower bound for the number of mistakes that can occur within the active setting. 

\begin{theorem}\label{t:active_lb} Given any directed signed graph G(V, E) and
  labeling irregularity budget $K \le \floor*{\frac{|E|}{2}}$, there exist a
  randomized label strategy $\by$ such that the
  expected number of mistakes made by any active algorithm $A$ asking any
  number $Q$ of queries satisfies $$\bE m_A \ge \frac{\Psi(\by)}{|E|}\bigl(|E|-Q\bigr)-1$$ 
  while $\Psi(\by) \le K$.
\end{theorem}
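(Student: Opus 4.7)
The plan is to apply Yao's minimax principle by exhibiting a random labeling distribution $\by$ with $\Psi(\by) \le K$ almost surely, and lower bounding $\bE m_A$ for any deterministic active algorithm under this distribution. The adversary picks a uniformly random subset $S \subseteq E$ of size $K$ and sets $y_e = -1$ for $e \in S$ and $y_e = +1$ otherwise. Since $\min(d_{\mathrm{out}}^+(i), d_{\mathrm{out}}^-(i)) \le d_{\mathrm{out}}^-(i)$ for every node $i$, one immediately has $\Psi(\by) \le \sum_i d_{\mathrm{out}}^-(i) = |S| = K$ with probability one, which uses the hypothesis $K \le \lfloor |E|/2 \rfloor$ only to ensure the distribution is well-defined.

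Next, because $S$ is uniformly distributed over all $K$-subsets of $E$, the joint distribution of the labels returned by $Q$ queries is invariant under query order and independent of whether the queries are chosen adaptively, so the analysis reduces to the non-adaptive setting stipulated by the paper. Fix a query set $\mathcal{Q}$ of size $Q$ and let $k^-$ denote the number of $-1$ labels observed. The posterior over $\by$ is uniform on labelings that agree with the queries and contain exactly $K$ negatives, so every unqueried edge is $-1$ with the same posterior probability $(K - k^-)/(|E| - Q)$. By this exchangeability, the algorithm's optimal prediction (which in the natural regime coincides with the trollness rule $\sgn(\tfrac{1}{2} - \that(i))$) assigns a single sign to all unqueried edges, yielding $m_A = \min\bigl(K - k^-, (|E|-Q) - (K - k^-)\bigr)$.

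The main computation is then to lower bound $\bE \min(X, Y)$ with $X = K - k^-$ and $Y = (|E|-Q) - X$. Since $k^-$ is hypergeometrically distributed with $\bE k^- = QK/|E|$, we have $\bE X = K(|E|-Q)/|E|$, matching the leading term of the claim. The identity $\min(X, Y) = X - (X-Y)^+$ reduces the task to controlling $\bE(X-Y)^+$. In the regime $K \le (|E|-Q)/2$ the inequality $X \le Y$ holds deterministically, so $(X-Y)^+ = 0$ and the bound holds with no slack at all. In the complementary regime $K > (|E|-Q)/2$, I would bound $\bE(X-Y)^+ \le 1$ by a direct combinatorial pairing between outcomes with $X > Y$ and swapped outcomes with $X \le Y$, exploiting the integrality of $k^-$ and the hypothesis $K \le \lfloor |E|/2 \rfloor$.

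The main obstacle is precisely this boundary regime, where the posterior $(K - k^-)/(|E|-Q)$ sits near $\tfrac{1}{2}$ and the algorithm's predictions may flip sign depending on the queried labels; a naive hypergeometric variance bound only yields an $O(\sqrt{Q})$ slack, so the constant $-1$ must be recovered by a more careful argument, either via the combinatorial pairing above or by leveraging the fact that the paper's algorithms use a fixed prediction rule that is no better than the Bayes predictor I analyse. Once the shortfall is controlled, replacing $K$ by $\Psi(\by) \le K$ on the right-hand side completes the proof.
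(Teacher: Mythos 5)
Your construction is the same as the paper's (label a uniformly random $K$-subset of $E$ negative), and your reduction of the optimal algorithm's expected loss to $\bE\min(X,Y)$, with $X=K-k^-$ hypergeometric and $Y=(|E|-Q)-X$, is actually \emph{more} careful than the paper's sketch, which never conditions on the observed labels. The case $K\le(|E|-Q)/2$ is handled correctly and with no slack. The gap is the boundary regime: the bound $\bE(X-Y)^+\le 1$ that you defer to a ``combinatorial pairing'' is not merely unproven, it is false, so no pairing argument can establish it. Take $G$ a star with all $|E|=N$ edges outgoing from one node (so $\Psi(\by)=K$ deterministically), $K=N/2$, $Q=N/2$, hence $M:=|E|-Q=N/2$. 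Then $X$ has mean $M/2$ and, since $K=N/2$, the distribution of $X$ equals that of $M-X$, so $\bE(X-Y)^+=\bE(2X-M)^+=\bE\left|X-M/2\right|=\Theta(\sqrt{M})$; consequently $\bE\min(X,M-X)=M/2-\Theta(\sqrt{M})$, which falls strictly below the target $\frac{K}{|E|}(|E|-Q)-1=M/2-1$ for large $N$. The $O(\sqrt{Q})$ slack you dismiss as ``naive'' is in fact the true order of the shortfall for this labeling distribution, and your fallback of ``leveraging the fact that the paper's algorithms use a fixed prediction rule'' is not available either: the theorem quantifies over \emph{any} active algorithm $A$, so you must beat the Bayes predictor, not the paper's rule.

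For comparison, the paper's own sketch does not survive this test: it lower-bounds each per-edge error by $\min\bigl(\Pr(y_{\eij}=-1),\Pr(y_{\eij}=+1)\bigr)$ computed from the \emph{marginal} label distribution, whereas the algorithm predicts after seeing the queried labels, and by concavity of $\min$ the expected posterior error $\bE\bigl[\min_{\pm}\Pr(y_{\eij}=\pm\mid\text{observations})\bigr]$ is at most, not at least, that marginal quantity. So the regime where you got stuck --- $K>(|E|-Q)/2$, where the posterior $(K-k^-)/(|E|-Q)$ can cross $\tfrac12$ --- is exactly the regime the paper glosses over. If you restrict to $Q\le|E|-2K$ your argument is complete and in fact tight; to cover all $Q$ you would need either a different labeling distribution or an explicit $-O(\sqrt{\,\cdot\,})$ correction in the statement.
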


We now prove that $\alcone$ always
makes in expectation  a number of mistakes close to our active lower bound
expression in Theorem\ \ref{t:active_lb} when the difference between $|E|$ and the number of queries $Q$ is not too small.

\begin{theorem}\label{t:active_one} 
Given any directed signed graph $G(V, E)$
  and any labeling $\by$, the expected number of mistakes made by $\alcone$
  satisfies $$\bE m_{\alcone} \le 2\Psi(\by)$$ while the number of
  distinct labels queried is not larger than $|V|$.
\end{theorem}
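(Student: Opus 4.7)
The plan is to reduce the theorem to a per-node expected-mistakes computation by exploiting the fact that $\alcone(t)$ draws only one outgoing edge per node. The key observation is that after a single query, the estimate $\that(i)=\dhat^-_{\mathrm{out}}(i)/\dhat_{\mathrm{out}}(i)$ equals $0$ if the sampled edge is positive and $1$ if it is negative, so by the prediction rule $\yhat_{i\rightarrow j}=\sgn\bigl(\tfrac{1}{2}-\that(i)\bigr)$ the algorithm predicts every non-queried outgoing edge of $i$ with the same sign as the sampled one. The remainder of the argument is then entirely local to each node.

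First I would compute the contribution of a single node $i$ with $d_{\mathrm{out}}(i)\ge 1$. Conditioning on the random sample: with probability $d^+_{\mathrm{out}}(i)/d_{\mathrm{out}}(i)$ the algorithm outputs $+1$ on the remaining outgoing edges and incurs $d^-_{\mathrm{out}}(i)$ mistakes, and with probability $d^-_{\mathrm{out}}(i)/d_{\mathrm{out}}(i)$ it outputs $-1$ and incurs $d^+_{\mathrm{out}}(i)$ mistakes. The expected number of mistakes contributed by node $i$ is therefore $2\,d^+_{\mathrm{out}}(i)\,d^-_{\mathrm{out}}(i)/d_{\mathrm{out}}(i)$, which using $\Psi(i)=\min\bigl(d^+_{\mathrm{out}}(i),d^-_{\mathrm{out}}(i)\bigr)$ rewrites as $2\Psi(i)\bigl(1-\Psi(i)/d_{\mathrm{out}}(i)\bigr)$ and is bounded above by $2\Psi(i)$ since the factor in parentheses lies in $[0,1]$. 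Degenerate nodes with $d_{\mathrm{out}}(i)\in\{0,1\}$ satisfy $\Psi(i)=0$ and contribute $0$ mistakes, so they do not perturb the bound.

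Summing over $V$ using linearity of expectation, and noting that the per-node samples are mutually independent, yields $\bE\, m_{\alcone}\le 2\sum_{i\in V}\Psi(i)=2\Psi(\by)$, which is the claimed bound. For the query-count statement, $\alcone$ draws at most one edge per node, and since the sets $\{E_{\mathrm{out}}(i)\}_{i\in V}$ partition $E$, the queries hit pairwise distinct edges; hence no more than $|V|$ distinct labels are ever requested.

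The whole argument is short and I do not expect a real obstacle: everything boils down to the elementary inequality $2d^+d^-/(d^++d^-)\le 2\min(d^+,d^-)$ combined with linearity of expectation. The only bookkeeping care is to isolate the degenerate nodes with $d_{\mathrm{out}}(i)\le 1$ (both sides vanish) and to confirm, for the query-count bound, that outgoing-edge sets at distinct nodes are disjoint so that independently sampled queries cannot collide.
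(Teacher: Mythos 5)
Your proposal is correct and follows essentially the same route as the paper: both condition on whether the single sampled outgoing edge carries the minority or majority label at node $i$, obtain the per-node expectation $2\Psi(i)\bigl(1-\Psi(i)/d_{\mathrm{out}}(i)\bigr)\le 2\Psi(i)$, and sum by linearity of expectation. The query-count argument (one edge per node, outgoing-edge sets disjoint) also matches the paper's.
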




Finally, we conclude this section analysing $\alclog$ and quantifying to what extent the accuracy performance guarantees can improve using this approach when we are allowed to query a larger number of labels. 

Let $\avgdeg$ be the average node degree in $G$, i.e. $\avgdeg=\frac{|E|}{|V|}$ and let $\overline{\Psi}_{0}(\by)$ be the average value of $\Psi(i)$ over all nodes $i$ such that $\Psi(i)\neq0$.

\begin{theorem}\label{t:active_log} 
Given any directed signed graph G(V, E)
  and any labeling $\by$ such that $\Psi(\by)>0$, the expected number of mistakes made by $\alclog$
  satisfies 
  $$\bE m_{\alclog} = \Psi(\by)+\mathcal{O}\left({\frac{\Psi(\by)}{\sqrt{\log\left(4\overline{\Psi}_{0}(\by)+1\right)}}}\right)$$ 
  while the number of distinct labels queried is equal to
  $\mathcal{O}\left(|V|\log\left(\avgdeg+1\right)\right)$ when $|E|=\Omega(|V|)$. 
\end{theorem}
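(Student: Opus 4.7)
The plan is to decompose $\bE m_{\alclog}$ into a per-node contribution, isolate $\Psi(\by)$ as the dominant term, apply a Chernoff--Hoeffding bound to the probability that $\alclog$ misreads the majority at each node, and finally aggregate the per-node excesses through Jensen's inequality applied to the concave envelope $g(x)=x/\sqrt{\log(4x+1)}$.

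The first step is to condition, at each node $i$ with $d_{\mathrm{out}}(i)>0$, on whether the sampled majority of the $k_i=\lge$ i.i.d.\ outgoing labels matches the true majority. If $p_i$ denotes the probability of a mismatch, the expected mistakes at $i$ equal $\Psi(i)+p_i\bigl(d_{\mathrm{out}}(i)-2\Psi(i)\bigr)$; every node with $\Psi(i)=0$ sees a single sign in $E_{\mathrm{out}}(i)$ and contributes zero mistake, so that
\[
\bE m_{\alclog}=\Psi(\by)+\sum_{i\in V_0}p_i\bigl(d_{\mathrm{out}}(i)-2\Psi(i)\bigr),\qquad V_0\triangleq\{i:\Psi(i)>0\}.
\]
The samples at $i$ being i.i.d.\ Bernoulli of mean $\Psi(i)/d_{\mathrm{out}}(i)\le 1/2$, Hoeffding's inequality delivers $p_i\le\exp\!\bigl(-k_i(d_{\mathrm{out}}(i)-2\Psi(i))^2/(2\,d_{\mathrm{out}}(i)^2)\bigr)$.

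The per-node excess is the analytic heart of the proof. Writing $n_i=d_{\mathrm{out}}(i)$, $m_i=\Psi(i)$, $\epsilon_i=1/2-m_i/n_i\in(0,1/2]$ and using $k_i\ge 4\log(n_i+1)$, the excess at $i$ is bounded by $2n_i\epsilon_i(n_i+1)^{-8\epsilon_i^2}$. The variable $n_i$ is not free: the identity $n_i=2m_i/(1-2\epsilon_i)$ ties it to $\epsilon_i$ for each fixed $m_i$. Substituting and differentiating the logarithm of the envelope in $\epsilon_i$ pins the maximiser at $\epsilon_i^{\star}=\Theta(1/\sqrt{\log m_i})$, so that $n_i\approx 2m_i$ at the optimum; substituting back yields the uniform per-node bound $p_i(n_i-2m_i)=\mathcal O\bigl(m_i/\sqrt{\log(m_i+1)}\bigr)$. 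A short second-derivative computation then shows that $g(x)=x/\sqrt{\log(4x+1)}$ is concave on $[1,\infty)$, so by Jensen on $V_0$
\[
\sum_{i\in V_0}g(\Psi(i))\;\le\;|V_0|\,g\!\bigl(\overline{\Psi}_0(\by)\bigr)\;=\;\frac{\Psi(\by)}{\sqrt{\log\!\bigl(4\overline{\Psi}_0(\by)+1\bigr)}};
\]
since $\sqrt{\log(4m_i+1)/\log(m_i+1)}$ is uniformly bounded for $m_i\ge 1$, this absorbs the per-node bound and proves the excess claim.

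For the distinct-label count, sampling with replacement makes at most $k_i$ queries from $i$; summing and applying Jensen to the concave function $\log$ together with $\sum_i d_{\mathrm{out}}(i)=|E|$ gives $\sum_i k_i\le 4|V|+4|V|\log(\avgdeg+1)$, which is $\mathcal O\bigl(|V|\log(\avgdeg+1)\bigr)$ under the hypothesis $|E|=\Omega(|V|)$. The main difficulty is the per-node optimisation of the previous paragraph: one must respect the coupling $n_i=2m_i/(1-2\epsilon_i)$ at each node, because the envelope $2n_i\epsilon_i(n_i+1)^{-8\epsilon_i^2}$ is unbounded in $n_i$ alone when $\epsilon_i<1/(2\sqrt{2})$; only the constrained optimisation yields the correct scaling $m_i/\sqrt{\log(m_i+1)}$, after which the concavity check and Jensen's inequality complete the proof routinely.
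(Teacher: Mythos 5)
Your proposal is correct and follows essentially the same route as the paper's proof: the same per-node decomposition $\bE m^{(i)} \le \Psi(i) + p_i\bigl(d_{\mathrm{out}}(i)-2\Psi(i)\bigr)$, a concentration bound on the event that the sampled majority flips (you use additive Hoeffding where the paper uses a multiplicative Chernoff bound, which only changes constants in the exponent), the same substitution $d_{\mathrm{out}}(i)=2\Psi(i)/(1-2\epsilon_i)$ followed by an optimisation over $\epsilon_i$ yielding the per-node excess $\mathcal{O}\bigl(\Psi(i)/\sqrt{\log(\Psi(i)+1)}\bigr)$, Jensen's inequality on the concave envelope to pass to $\overline{\Psi}_{0}(\by)$, and the identical Jensen-on-$\log$ argument for the query count. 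The only presentational difference is that you carry out the per-node optimisation as a single constrained calculus problem, whereas the paper splits into the ranges $\epsilon_i\in[0,\tfrac{1}{4})$ and $\epsilon_i\in[\tfrac{1}{4},\tfrac{1}{2})$ and bounds each explicitly; to make your version fully airtight you would still need to verify that every critical point (and the boundary behaviour as $\epsilon_i\to\tfrac{1}{2}^-$) respects the claimed scaling, which is precisely what the paper's second case handles.
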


Theorem\ \ref{t:active_log} quantifies the performance of $\alclog$ in the worst case labeling for arbitrary input graphs.
We conjecture that the mistake expression of Theorem\ \ref{t:active_log} can be dramatically reduced in a non-adversarial setting, where the labels are assigned to edges in a stochastic way. An example of such setting could be the following. In a given arbitrary input graph, starting from a labeling $\by'$ such that $\Psi(\by')=0$, each label is flipped independently with a probability $p \le 1/2$. Hence, in this setting $\bE \Psi(\by) \le p|E|$ and $p$ is therefore a parameter expressing to what extent labeling $\by$ can be irregular with respect to our bias. Although the worst case analysis is important in that it provides strong performance guarantees holding for arbitrary labeling, we believe that an analysis for the above described non-adversarial setting can offer significant information about the performance of our methods on real-world signed networks.

  



\section{Experiments}\label{sec:results}
In this section, we evaluate our Link Classification method on representative real world
datasets within two different transductive learning settings, batch and active,
showing that it competes well against existing methods in terms of predictive and computational performance.

These three directed \ssn{} are widely
used as benchmark for this task. Specifically, in \wik{}, there is an
edge from user $i$ to user $j$ if $j$ applies for an admin position and $i$ votes
for or against that promotion. In \sla{}, a news sharing and
commenting website, members $i$ can tag other
members $j$ as friends or foes. Finally, in
\epi{,} an online shopping website, each user $j$ can review
products. Based on these reviews, another user $i$ can display whether
he considers $j$ to be reliable or not. Although the
semantic of the signs is not the same in these three networks, we will
see that our bias apply to all of them. From the datasets properties
showed in \autoref{tab:dataset}, we note that most edge labels are positive.
Therefore, to assess our prediction performance, in addition to the standard
measure of accuracy, we also report the Matthews Correlation Coefficient~\citep[MCC]{MCC00},
defined as:  \[
	\mathrm{MCC} = \frac{tp\times tn-fp\times fn} {\sqrt{ (tp + fp) ( tp + fn ) ( tn + fp ) ( tn + fn ) } }
\]
It combines all the four quantities found in a binary confusion
matrix ($t$rue $p$ositive, $t$rue $n$egative, $f$alse $p$ositive and $f$alse $n$egative)
into a single metric and ranges from $+1$ (when all predictions are
correct) to $-1$ (when all predictions are incorrect) through $0$ (when
prediction are made uniformly at random).
Another notable feature of our datasets
is the relative regularity of their labeling with respect to our
complexity measures defined in \autoref{s:analysis} (especially in \epi{} case, which is confirmed by higher accuracy).

\begin{table}[b]
  \centering
  \small
  \caption{Dataset properties.\label{tab:dataset}}
  \begin{tabular}{lrrrrrrrr}
    \toprule
    {}     & $|V|$       & $|E|$       & $\frac{|E^+|}{|E|}$ & $\frac{\Psi_{\mathrm{in}}}{|E|}$ & $\frac{\Psi_{\mathrm{out}}}{|E|}$\\
    \midrule
    \wik{} & \np{7115}   & \np{103108} & 78.79\%               & 0.19                    & 0.14            \\
    \sla{} & \np{82140}  & \np{549202} & 77.40\%               & 0.17                    & 0.14            \\
    \epi{} & \np{131580} & \np{840799} & 85.29\%               & 0.07                    & 0.09            \\
    \bottomrule
  \vspace{-0.7cm}
  \end{tabular}
\end{table}


We evaluate the prediction rules described in
Sections~\ref{s:one_feature} and \ref{s:two_features}.
However, we report only the use of \emph{trollness} as a single feature, as it
performs as well as \emph{unpleasantness} on \epi{} and has 5\% higher accuracy
on \wik{} and \sla{}.
Moreover, we use our estimated features to train two well
established linear separators as implemented by the Scikit-learn
library~\cite{scikit-learn}: Logistic Regression and Perceptron.
Yet, we report only Perceptron results since both methods have similar performances.
Finally, we experiment with the Nyström approximation of a Gaussian kernel~\cite{Nystrom}
but concluded that the small performance gain was not worth the
large increase in training time.

In the active setting, we compare our approach with the \treestar{k} (\enquote{TreeletStar} in~\cite{TreeStar12})
algorithm, which assumes that \enquote{edge labels are obtained through
perturbations of an initial sign assignment consistent with a two-clustering
of the nodes}. It operates by first
connecting stars of short subtrees extracted from a spanning tree and
querying the sign of the edges forming this spanning structure. Then it predicts the sign of
the other edges $(i, j)$ based on a label multiplicative rule used in a path connecting $i$ with $j$ for which all its labels are revealed. 
Although this method disregards the direction of the edges, to the best of our
knowledge it is the only one which solves the Link Classification problem in the
active setting.

\begin{figure*}[!t]
	\centering
	\begin{subfigure}[b]{.32\textwidth}
		\centering
		\includegraphics[height=.189\textheight]{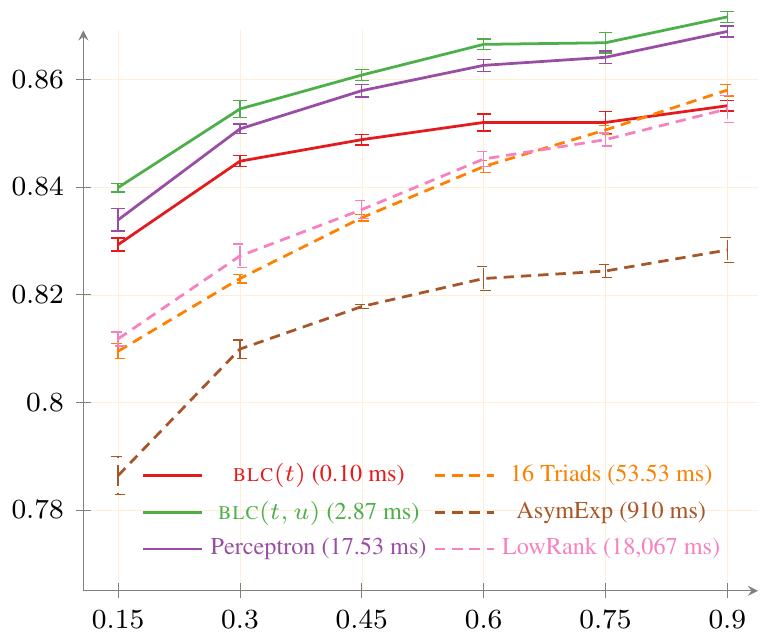}
		\caption{\wik{} Accuracy \label{fig:wik_batch_acc}}
	\end{subfigure}
	\begin{subfigure}[b]{.32\textwidth}
		\centering
		\includegraphics[height=.189\textheight]{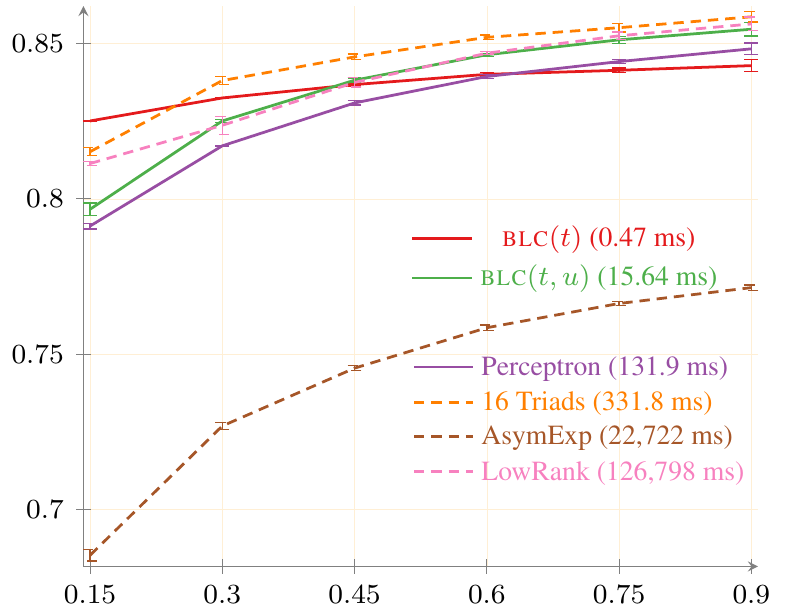}
		\caption{\sla{} Accuracy \label{fig:sla_batch_acc}}
	\end{subfigure}
	\begin{subfigure}[b]{.32\textwidth}
		\centering
		\includegraphics[height=.189\textheight]{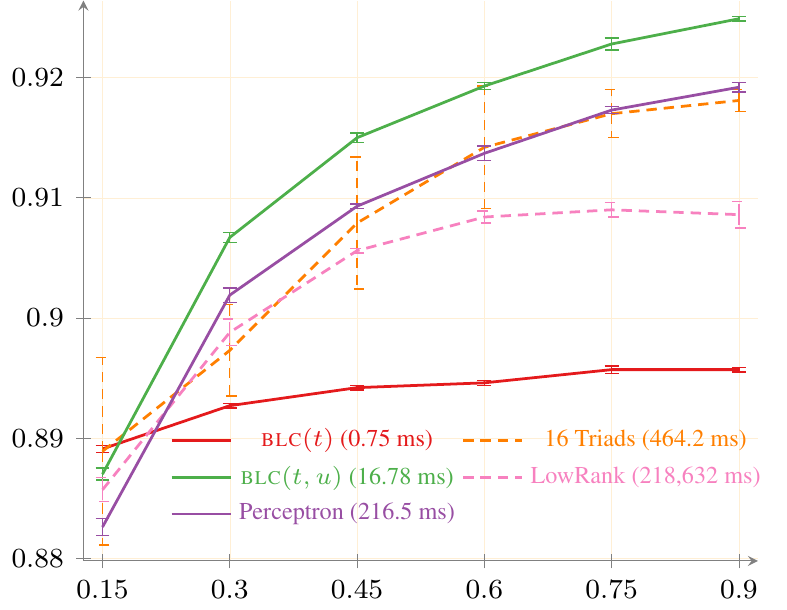}
		\caption{\epi{} Accuracy \label{fig:epi_batch_acc}}
	\end{subfigure}
	~
	\begin{subfigure}[b]{0.32\textwidth}
		\centering
		\includegraphics[height=.189\textheight]{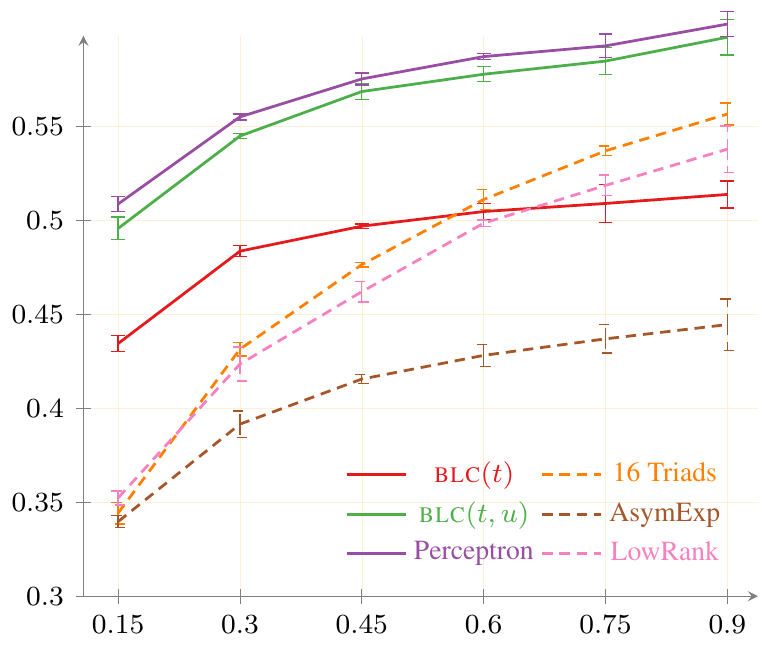}
		\caption{\wik{} MCC \label{fig:wik_batch_mcc}}
	\end{subfigure}
	\begin{subfigure}[b]{.32\textwidth}
		\centering
		\includegraphics[height=.189\textheight]{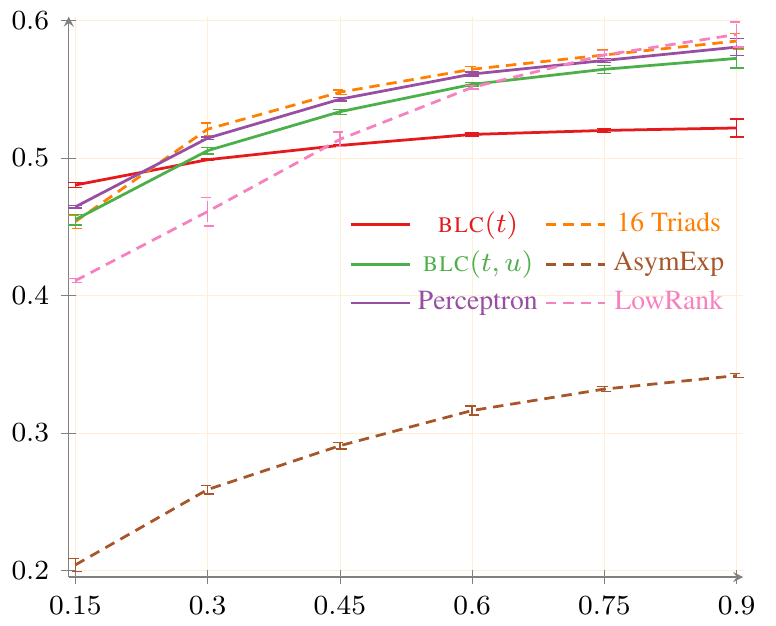}
		\caption{\sla{} MCC \label{fig:sla_batch_mcc}}
	\end{subfigure}
	\begin{subfigure}[b]{.32\textwidth}
		\centering
		\includegraphics[height=.189\textheight]{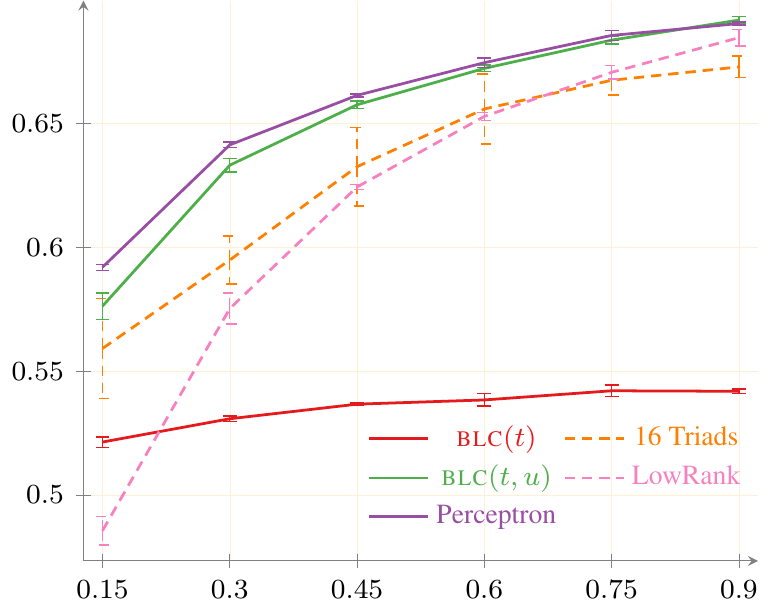}
		\caption{\epi{} MCC \label{fig:epi_batch_mcc}}
	\end{subfigure}
	\caption{Experimental results: In addition to accuracy, the first row
	  indicates the time needed by each method to process the training
	  set and make its prediction. Errors bars denote one standard
	  deviation around the mean of the 12 runs. We do not include AsymExp
	  in \epi{} because its performance was far from all other methods. Compared with previous
	works in dashed lines, our methods compete very well while being much faster.\label{fig:batch_res}}
\end{figure*}

In the batch setting, we implemented three methods covering
three different paradigms: local status theory, global matrix completion and spectral
decomposition.\\
The status theory heuristic posits that a positive link from
    $i$ to $j$ denotes that user $i$ considers user $j$ as having a higher status or skill
    than himself~\cite{Leskovec2010}. It implies that among the 16 possible
    triads of three nodes with two directed signed edges among them, some must
    be more represented than others. The \emph{16 Triads} method exploits this fact by counting
    for each edge in the training set how frequently it is involved in each of
    the 16 triad types.
    Based on these edge features and 7 degree features of each edge endpoints,
    a Logistic Regression model is trained and
    use to predict the sign of the test set edges.
    \\
Looking at undirected graphs with a more global perspective,
\citet{LowRankCompletion14} consider the observed adjacency matrix $A$, made of
the edges in $E_{\mathrm{train}}$, as a noisy sampling of the adjacency matrix $A^\star$ of an
underlying complete graph satisfying the weakly balance condition (that is with
no triangle containing only one negative edge). This
condition implies the existence of a small number $k$ of node clusters with positive
links within themselves and negative links across clusters, which in turn
yields $\rank(A^\star)=k$.
By recovering a complete matrix $\tilde{A}$ which matches the non-zeros entries of $A$,
it is possible to predict the sign of $\eij \notin E_{\mathrm{train}}$ as $\yhat_{i,j} =
\sgn(\tilde{A}_{i,j})$. Although the exact version of this problem is NP-hard,
authors assume that $k$ is an hyperparameter known beforehand\footnote{The
method is showed to be robust to the choice of its value, thus we pick $k=7$.}
and look for two matrices $W,H\in \mathbb{R}^{k\times|V|}$ which minimise a sigmoidal
reconstruction loss with $A$, subject to a regularization term. We refer to
this method as \emph{LowRank}.
\\
  Finally, the \emph{AsymExp} heuristic~\cite{Kunegis2009} computes the
  exponential of the observed symmetrized adjacency matrix $A+A^T$ after it has been reduced to $z$
  dimensions by eigenvector decomposition. This allows summing the product of the signs of all paths between two
  nodes with decreasing weight depending of the path length. The label of a
  test edge is predicted as the sign of this sum. We set the
  parameter $z$ equal to $15$ because it is one of the best in~\citep[Fig.
  11]{Kunegis2009} and it performs well on real datasets in~\citep[Fig. 3]{TreeStar12}.
  Although the authors also present a version using the directed adjacency matrix $A$, it
  performed slightly worse in our testing.


We start presenting our results within the active setting, as it follows more closely our theoretical
analysis. Moreover, it has a natural application in social networks: by
choosing a small set of edges and asking users to label them, it is possible to
accurately predict the sign of the other edges of the network. This is valuable even outside
the \ssn{} context, since regular social networks like Facebook also present
negative relationships, albeit not explicitly~\cite{Frenemy12}.
To provide a fair comparison with \treestar{k}, we restrict our
sampling to the largest weakly connected component, which however
includes $99.9\%$, $100\%$ and $99.1\%$ of edges in \wik{}, \sla{} and \epi{}
respectively.
We let each method performs its sampling of edges 12 times and report averaged
results in \autoref{tab:active}. It is striking that by querying far few edges
than a spanning tree and running in less than a millisecond, \alcone(t) manages
to reach almost 80\% accuracy over all the other remaining edges. Although
querying more edges with \alclog$(t)$ do not markedly improve performances,
combining a larger budget with the use of our two features learning procedure \alclog$(t,
u)$ outperforms existing methods with a low running time matching the algorithm complexity
presented in \autoref{s:two_features}.

\begin{table}\centering
  \setlength{\tabcolsep}{4.5pt}
  \small
  \caption{Mean Accuracy and MCC (with their standard
    deviation) of active algorithms.
    We also report the mean fraction of
  edges queried and the time taken to query the labels and make the prediction
(but no variance as they are very close to zero).\label{tab:active}}
  \begin{tabular}{lrrrr}
    \toprule
  {}              & Accuracy       & MCC             & $\frac{|E_{\mathrm{train}}|}{|E|}$ & time      \\
    \midrule
    & \multicolumn{4}{c}{\wik{}} \\
    \cmidrule{2-5}
    \alcone$(t)$    &  $79.8 \pm 0.4$  &  $38.5 \pm 1.0$  &   $2.3\%$  &  $0.2$ ms \\
    \alclog$(t)$    &  $84.5 \pm 0.1$  &  $43.3 \pm 0.5$  &  $27.6\%$  &  $0.1$ ms \\
    \alcone$(t, u)$ &  $79.6 \pm 0.4$  &  $39.4 \pm 0.7$  &   $8.0\%$  &  $0.7$ ms \\
    \alclog$(t, u)$ &  $87.2 \pm 0.1$  &  $54.6 \pm 0.6$  &  $48.7\%$  &  $2.5$ ms \\
    \treestar{7}    & $69.3 \pm 2.0$ & $20.1 \pm 2.6$  & $7.0\%$   & $253$ ms \\
    \midrule
    & \multicolumn{4}{c}{\sla{}} \\
    \cmidrule{2-5}
    \alcone$(t)$    &  $78.6 \pm 0.3$  &  $40.4 \pm 0.7$  &   $8.0\%$  &   $0.8$ ms \\
    \alclog$(t)$    &  $83.3 \pm 0.1$  &  $53.4 \pm 0.2$  &  $35.4\%$  &   $0.6$ ms \\
    \alcone$(t, u)$ &  $81.5 \pm 0.1$  &  $46.0 \pm 0.7$  &  $19.0\%$  &   $4.5$ ms \\
    \alclog$(t, u)$ &  $86.0 \pm 0.0$  &  $59.1 \pm 0.3$  &  $60.8\%$  &  $15.8$ ms \\
    \treestar{8}    & $69.5 \pm 0.6$ & $24.4 \pm 0.8$  & $16.4\%$   & $4685$ ms \\
    \midrule
    & \multicolumn{4}{c}{\epi{}} \\
    \cmidrule{2-5}
    \alcone$(t)$    &  $86.2 \pm 0.2$  &  $46.5 \pm 0.8$  &  $10.6\%$  &   $1.2$ ms \\
    \alclog$(t)$    &  $89.1 \pm 0.1$  &  $58.4 \pm 0.2$  &  $32.9\%$  &   $1.0$ ms \\
    \alcone$(t, u)$ &  $89.8 \pm 0.1$  &  $59.8 \pm 0.4$  &  $17.8\%$  &   $5.8$ ms \\
    \alclog$(t, u)$ &  $93.8 \pm 0.1$  &  $72.1 \pm 0.2$  &  $52.5\%$  &  $14.9$ ms \\
    \treestar{9}    & $74.6 \pm 2.4$ & $28.8 \pm 3.7$  & $16.9\%$   & $7859$ ms \\
    \bottomrule
  \end{tabular}
\end{table}

Because most methods we compared with operate within the batch setting, we also evaluate our
approach in that setting. Specifically, we sample uniformly at random
$15$, $30$, $45$, $60$, $75$ and $90$\% of the edges and train each
method on that sample before predicting the edges not sampled. This procedure is repeated 12 times to account for the randomness of the sampling.
Although some competitors are able to outperform our \blc$(t)$ rule given a
training set large enough, the use of our two features still let us achieve the overall
best results in the shortest amount of time. An exception to that statement is
the accuracy on the \sla{} dataset. However, a simple heuristic allows us to
regain our crown: when predicting $\eij \in E_{\mathrm{test}}$, if the reciprocal edge
\eji{} is part of the training set, we set $\sgn(\eij) = \sgn(\eji)$. Because
most reciprocal edges have the same sign, this gives Accuracy and MCC a boost
of 0.4 to almost 4\% depending on the dataset (see
Tables~\ref{tab:batch_postprocessing} and \ref{tab:batch_postprocessing_mcc}
for full details).
Another point to notice is that with an equal budget, e.g. 60\% of \epi{} edges,
the sampling strategy \alclog$(t,u)$ achieves $93.6\%$ accuracy whereas
\blc$(t,u)$ can only reach $92.5\%$, highlighting the benefits of our careful
active selection of training edges.

\section{Conclusion}\label{sec:conclusion}

In this work, we showed a new approach to solve the transductive Link Classification problem operating within the active and batch setting. We demonstrated our methods are able to achieve very good performance while being extremely fast. We also provided a rigorous analysis of our main approach within the active setting, based on a very natural and reasonable complexity measure for this problem. Since our algorithms exploit two features that are local irrespective of the structural complexity of the input network topology, we believe our approach can be easily parallelised. Furthermore, we plan to apply our methods to users-products bipartite input graphs, where users provide positive or negative feedbacks for products and a central issue is to predict the user preferences. Finally, we believe our model could be easily extended to the online learning setting and our approach could be modified in order to operate with weighted input graphs.


\bibliography{icml}

\begin{thebibliography}{48}
\providecommand{\natexlab}[1]{#1}
\providecommand{\url}[1]{\texttt{#1}}
\expandafter\ifx\csname urlstyle\endcsname\relax
  \providecommand{\doi}[1]{doi: #1}\else
  \providecommand{\doi}{doi: \begingroup \urlstyle{rm}\Url}\fi

\bibitem[Arkes \& Blumer(1985)Arkes and Blumer]{sunkCost85}
Arkes, Hal~R and Blumer, Catherine.
\newblock The psychology of sunk cost.
\newblock \emph{Organizational Behavior and Human Decision Processes},
  35\penalty0 (1):\penalty0 124 -- 140, 1985.

\bibitem[Bachi et~al.(2012)Bachi, Coscia, Monreale, and Giannotti]{Bachi2012}
Bachi, Giacomo, Coscia, Michele, Monreale, Anna, and Giannotti, Fosca.
\newblock {Classifying Trust/Distrust Relationships in Online Social Networks}.
\newblock \emph{2012 International Conference on Privacy, Security, Risk and
  Trust}, pp.\  552--557, 2012.

\bibitem[Baldi et~al.(2000)Baldi, Brunak, Chauvin, Andersen, and
  Nielsen]{MCC00}
Baldi, Pierre, Brunak, S\o~ren, Chauvin, Yves, Andersen, C~A, and Nielsen,
  Henrik.
\newblock {Assessing the accuracy of prediction algorithms for classification:
  an overview}.
\newblock \emph{Bioinformatics (Oxford, England)}, 16\penalty0 (5):\penalty0
  412--424, 2000.

\bibitem[Blumer \& Döring(2012)Blumer and Döring]{sameOnline12}
Blumer, Tim and Döring, Nicola.
\newblock Are we the same online? the expression of the five factor personality
  traits on the computer and the internet.
\newblock \emph{Cyberpsychology: Journal of Psychosocial Research on
  Cyberspace}, 6\penalty0 (3), 2012.

\bibitem[Brown et~al.(2005)Brown, Asher, and Cialdini]{agePFC05}
Brown, Stephanie~L., Asher, Terrilee, and Cialdini, Robert~B.
\newblock Evidence of a positive relationship between age and preference for
  consistency.
\newblock \emph{Journal of Research in Personality}, 39\penalty0 (5):\penalty0
  517 -- 533, 2005.

\bibitem[Cartwright \& Harary(1956)Cartwright and
  Harary]{cartwright1956structural}
Cartwright, Dorwin and Harary, Frank.
\newblock Structural balance: a generalization of heider's theory.
\newblock \emph{Psychological review}, 63\penalty0 (5):\penalty0 277, 1956.

\bibitem[Cesa-Bianchi et~al.(2012{\natexlab{a}})Cesa-Bianchi, Gentile, Vitale,
  and Zappella]{CCCC12}
Cesa-Bianchi, Nicol\`{o}, Gentile, Claudio, Vitale, Fabio, and Zappella,
  Giovanni.
\newblock {A correlation clustering approach to link classification in signed
  networks}.
\newblock In \emph{Proc. of the 25th Annual Conference on Learning Theory},
  2012{\natexlab{a}}.

\bibitem[Cesa-Bianchi et~al.(2012{\natexlab{b}})Cesa-Bianchi, Gentile, Vitale,
  and Zappella]{TreeStar12}
Cesa-Bianchi, Nicol{\`{o}}, Gentile, Claudio, Vitale, Fabio, and Zappella,
  Giovanni.
\newblock {A linear time active learning algorithm for link classification}.
\newblock In \emph{Advances in Neural Information Processing Systems 25},
  2012{\natexlab{b}}.

\bibitem[Cheng et~al.(2015)Cheng, Danescu-Niculescu-Mizil, and
  Leskovec]{TrollDetection15}
Cheng, Justin, Danescu-Niculescu-Mizil, Cristian, and Leskovec, Jure.
\newblock Antisocial behavior in online discussion communities.
\newblock In \emph{International AAAI Conference on Web and Social Media},
  2015.

\bibitem[Chiang et~al.(2014)Chiang, Hsieh, Natarajan, Dhillon, and
  Tewari]{LowRankCompletion14}
Chiang, Kai-yang, Hsieh, Cho-jui, Natarajan, Nagarajan, Dhillon, Inderjit~S.,
  and Tewari, Ambuj.
\newblock {Prediction and Clustering in Signed Networks: A Local to Global
  Perspective}.
\newblock \emph{Journal of Machine Learning Research}, 15:\penalty0 1177--1213,
  2014.

\bibitem[Cialdini \& Goldstein(2004)Cialdini and Goldstein]{reciprocity04}
Cialdini, Robert~B. and Goldstein, Noah~J.
\newblock Social influence: Compliance and conformity.
\newblock \emph{Annual Review of Psychology}, 55\penalty0 (1):\penalty0
  591--621, 2004.

\bibitem[Cialdini \& Trost(1998)Cialdini and Trost]{publicCialdini98}
Cialdini, Robert~B and Trost, Melanie~R.
\newblock Social influence: Social norms, conformity and compliance.
\newblock In \emph{The handbook of social psychology}, pp.\  151--192. 1998.

\bibitem[Cialdini et~al.(1995)Cialdini, Trost, and Newsom]{PFC95}
Cialdini, Robert~B., Trost, Melanie~R., and Newsom, Jason~T.
\newblock Preference for consistency: The development of a valid measure and
  the discovery of surprising behavioral implications.
\newblock \emph{Journal of Personality and Social Psychology}, 69\penalty0
  (2):\penalty0 318--328, 1995.

\bibitem[Cole \& Griffiths(2007)Cole and Griffiths]{MMO07}
Cole, Helena and Griffiths, Mark~D.
\newblock Social interactions in massively multiplayer online role-playing
  gamers.
\newblock \emph{CyberPsychology \& Behavior}, 10\penalty0 (4):\penalty0
  575--583, 2007.

\bibitem[Davis(1967)]{davis1967clustering}
Davis, James~A.
\newblock Clustering and structural balance in graphs.
\newblock \emph{Human relations}, 1967.

\bibitem[Derks et~al.(2008)Derks, Bos, and von Grumbkow]{Emoticons08}
Derks, Daantje, Bos, Arjan E.~R., and von Grumbkow, Jasper.
\newblock Emoticons and online message interpretation.
\newblock \emph{Social Science Computer Review}, 26\penalty0 (3):\penalty0
  379--388, 2008.

\bibitem[Dunbar et~al.(2015)Dunbar, Arnaboldi, Conti, and Passarella]{mirror15}
Dunbar, R.I.M., Arnaboldi, Valerio, Conti, Marco, and Passarella, Andrea.
\newblock The structure of online social networks mirrors those in the offline
  world.
\newblock \emph{Social Networks}, 43:\penalty0 39 -- 47, 2015.

\bibitem[Festinger(1957)]{Dissonance57}
Festinger, Leon.
\newblock \emph{{A Theory of Cognitive Dissonance}}.
\newblock 1957.

\bibitem[Festinger \& Carlsmith(1959)Festinger and Carlsmith]{Carlsmith59}
Festinger, Leon and Carlsmith, James~M.
\newblock Cognitive consequences of forced compliance.
\newblock \emph{The Journal of Abnormal and Social Psychology}, 58\penalty0
  (2):\penalty0 203--210, 1959.

\bibitem[Gawronski(2012)]{Gawronski2012}
Gawronski, Bertram.
\newblock Back to the future of dissonance theory: Cognitive consistency as a
  core motive.
\newblock \emph{Social Cognition}, 30\penalty0 (6):\penalty0 652--668, 2012.

\bibitem[Gosling et~al.(2004)Gosling, Vazire, Srivastava, and
  John]{onlineSurvey04}
Gosling, Samuel~D., Vazire, Simine, Srivastava, Sanjay, and John, Oliver~P.
\newblock Should we trust web-based studies?: A comparative analysis of six
  preconceptions about internet questionnaires.
\newblock \emph{American Psychologist}, 59\penalty0 (2):\penalty0 93--104,
  2004.

\bibitem[Grieve \& Elliott(2013)Grieve and Elliott]{Cyberfaking13}
Grieve, Rachel and Elliott, Jade.
\newblock Cyberfaking: I can, so i will? intentions to fake in online
  psychological testing.
\newblock \emph{Cyberpsychology, Behavior, and Social Networking}, 16\penalty0
  (5):\penalty0 364--369, 2013.

\bibitem[Guha et~al.(2004)Guha, Kumar, Raghavan, and
  Tomkins]{guha2004propagation}
Guha, Ramanthan, Kumar, Ravi, Raghavan, Prabhakar, and Tomkins, Andrew.
\newblock Propagation of trust and distrust.
\newblock In \emph{Proc. of the 13th international conference on World Wide
  Web}, pp.\  403--412, 2004.

\bibitem[Hajcak et~al.(2004)Hajcak, McDonald, and Simons]{Hajcak04}
Hajcak, Greg, McDonald, Nicole, and Simons, Robert~F.
\newblock Error-related psychophysiology and negative affect.
\newblock \emph{Brain and Cognition}, 56\penalty0 (2):\penalty0 189 -- 197,
  2004.

\bibitem[Hardaker(2010)]{Hardaker10}
Hardaker, Claire.
\newblock Trolling in asynchronous computer-mediated communication: From user
  discussions to academic definitions.
\newblock \emph{Journal of Politeness Research. Language, Behaviour, Culture},
  6\penalty0 (2), 2010.

\bibitem[Harmon-Jones et~al.(2009)Harmon-Jones, Amodio, and
  Harmon-Jones]{Harmon09}
Harmon-Jones, Eddie, Amodio, David~M., and Harmon-Jones, Cindy.
\newblock Action--based model of dissonance: A review, integration, and
  expansion of conceptions of cognitive conflict.
\newblock In \emph{Advances in Experimental Social Psychology}, volume~41, pp.\
   119--166. 2009.

\bibitem[Heider(1958)]{heider1958psychology}
Heider, Fritz.
\newblock \emph{The psychology of interpersonal relations}.
\newblock 1958.

\bibitem[Heine \& Lehman(1997)Heine and Lehman]{east97}
Heine, Steven~J. and Lehman, Darrin~R.
\newblock Culture, dissonance, and self-affirmation.
\newblock \emph{Personality and Social Psychology Bulletin}, 23\penalty0
  (4):\penalty0 389--400, 1997.

\bibitem[Hollenbeck et~al.(1989)Hollenbeck, Williams, and Klein]{publicCom89}
Hollenbeck, John~R., Williams, Charles~R., and Klein, Howard~J.
\newblock An empirical examination of the antecedents of commitment to
  difficult goals.
\newblock \emph{Journal of Applied Psychology}, 74\penalty0 (1):\penalty0
  18--23, 1989.

\bibitem[Kays et~al.(2012)Kays, Gathercoal, and Buhrow]{onlineDisclosure12}
Kays, Kristina, Gathercoal, Kathleen, and Buhrow, William.
\newblock Does survey format influence self-disclosure on sensitive question
  items?
\newblock \emph{Computers in Human Behavior}, 28\penalty0 (1):\penalty0 251 --
  256, 2012.

\bibitem[Kunegis et~al.(2009)Kunegis, Lommatzsch, and Bauckhage]{Kunegis2009}
Kunegis, J{\'{e}}r{\^{o}}me, Lommatzsch, Andreas, and Bauckhage, Christian.
\newblock {The Slashdot Zoo: Mining a Social Network with Negative Edges}.
\newblock In \emph{Proc. of the 18th international conference on World wide
  web}, pp.\  741, 2009.

\bibitem[Leskovec et~al.(2010)Leskovec, Huttenlocher, and
  Kleinberg]{Leskovec2010}
Leskovec, Jure, Huttenlocher, Daniel, and Kleinberg, Jon.
\newblock {Predicting positive and negative links in online social networks}.
\newblock In \emph{Proc. of the 19th international conference on World wide
  web}, pp.\  641, 2010.

\bibitem[Luce et~al.(2007)Luce, Winzelberg, Das, Osborne, Bryson, and
  Taylor]{onlineSurvey07}
Luce, Kristine~H., Winzelberg, Andrew~J., Das, Smita, Osborne, Megan~I.,
  Bryson, Susan~W., and Taylor, C.~Barr.
\newblock Reliability of self-report: paper versus online administration.
\newblock \emph{Computers in Human Behavior}, 23\penalty0 (3):\penalty0 1384 --
  1389, 2007.

\bibitem[Papaoikonomou et~al.(2014)Papaoikonomou, Kardara, Tserpes, and
  Varvarigou]{Papaoikonomou2014}
Papaoikonomou, Athanasios, Kardara, Magdalini, Tserpes, Konstantinos, and
  Varvarigou, Theodora~A.
\newblock {Predicting Edge Signs in Social Networks Using Frequent Subgraph
  Discovery}.
\newblock \emph{IEEE Internet Computing}, 18\penalty0 (5):\penalty0 36--43,
  2014.

\bibitem[Papaoikonomou et~al.(2015)Papaoikonomou, Kardara, and
  Varvarigou]{EdgeSignsRating15}
Papaoikonomou, Athanasios, Kardara, Magdalini, and Varvarigou, Theodora.
\newblock {Trust Inference in Online Social Networks}.
\newblock In \emph{Proc. of the IEEE/ACM International Conference on Advances
  in Social Networks Analysis and Mining}, pp.\  600--604, 2015.

\bibitem[Pedregosa et~al.(2011)Pedregosa, Varoquaux, Gramfort, Michel, Thirion,
  Grisel, Blondel, Prettenhofer, Weiss, Dubourg, Vanderplas, Passos,
  Cournapeau, Brucher, Perrot, and Duchesnay]{scikit-learn}
Pedregosa, F., Varoquaux, G., Gramfort, A., Michel, V., Thirion, B., Grisel,
  O., Blondel, M., Prettenhofer, P., Weiss, R., Dubourg, V., Vanderplas, J.,
  Passos, A., Cournapeau, D., Brucher, M., Perrot, M., and Duchesnay, E.
\newblock Scikit-learn: Machine learning in {P}ython.
\newblock \emph{Journal of Machine Learning Research}, 12:\penalty0 2825--2830,
  2011.

\bibitem[Qian \& Adali(2014)Qian and Adali]{Qian2014sn}
Qian, Yi and Adali, Sibel.
\newblock {Foundations of Trust and Distrust in Networks: Extended Structural
  Balance Theory}.
\newblock \emph{ACM Trans. Web}, 8\penalty0 (3):\penalty0 13:1----13:33, 2014.

\bibitem[Ryan \& Xenos(2011)Ryan and Xenos]{facebookUser11}
Ryan, Tracii and Xenos, Sophia.
\newblock Who uses facebook? an investigation into the relationship between the
  big five, shyness, narcissism, loneliness, and facebook usage.
\newblock \emph{Computers in Human Behavior}, 27\penalty0 (5):\penalty0
  1658--1664, 2011.

\bibitem[Shachaf \& Hara(2010)Shachaf and Hara]{Shachaf10}
Shachaf, Pnina and Hara, Noriko.
\newblock Beyond vandalism: Wikipedia trolls.
\newblock \emph{Journal of Information Science}, 36\penalty0 (3):\penalty0
  357--370, 2010.

\bibitem[Sharot et~al.(2009)Sharot, De~Martino, and Dolan]{Sharot2009}
Sharot, Tali, De~Martino, Benedetto, and Dolan, Raymond~J.
\newblock How choice reveals and shapes expected hedonic outcome.
\newblock \emph{The Journal of Neuroscience}, 29\penalty0 (12):\penalty0
  3760--3765, 2009.

\bibitem[Suler(2004)]{Suler04}
Suler, J.
\newblock {The Online Disinhibition Effect}.
\newblock \emph{Cyberpsychology \& Behavior}, 7\penalty0 (3):\penalty0
  321--326, 2004.

\bibitem[Tang et~al.(2013)Tang, Gao, Hu, and Liu]{tang2013exploiting}
Tang, Jiliang, Gao, Huiji, Hu, Xia, and Liu, Huan.
\newblock Exploiting homophily effect for trust prediction.
\newblock In \emph{Proc. of the Sixth ACM International Conference on Web
  Search and Data Mining}, pp.\  53--62, 2013.

\bibitem[van~den Bos \& Maas(2009)van~den Bos and Maas]{VandenBos09}
van~den Bos, Kees and Maas, Marjolein.
\newblock On the psychology of the belief in a just world: Exploring
  experiential and rationalistic paths to victim blaming.
\newblock \emph{Personality and Social Psychology Bulletin}, 35\penalty0
  (12):\penalty0 1567--1578, 2009.

\bibitem[van Veen et~al.(2009)van Veen, Krug, Schooler, and
  Carter]{vanVeen2009}
van Veen, Vincent, Krug, Marie~K., Schooler, Jonathan~W., and Carter,
  Cameron~S.
\newblock Neural activity predicts attitude change in cognitive dissonance.
\newblock \emph{Nat Neuroscience}, 12\penalty0 (11):\penalty0 1469--1474, 2009.

\bibitem[Williams \& Seeger(2001)Williams and Seeger]{Nystrom}
Williams, Christopher K.~I. and Seeger, Matthias.
\newblock Using the nystr\"{o}m method to speed up kernel machines.
\newblock In \emph{Advances in Neural Information Processing Systems 13}, pp.\
  682--688. 2001.

\bibitem[Yang et~al.(2012)Yang, Smola, Long, Zha, and Chang]{Frenemy12}
Yang, Shuang-Hong, Smola, Alexander~J., Long, Bo, Zha, Hongyuan, and Chang, Yi.
\newblock Friend or frenemy?: Predicting signed ties in social networks.
\newblock In \emph{Proc. of the 35th International ACM SIGIR Conference on
  Research and Development in Information Retrieval}, pp.\  555--564, 2012.

\bibitem[Ye et~al.(2013)Ye, Cheng, Zhu, and Chen]{SNTransfer13}
Ye, Jihang, Cheng, Hong, Zhu, Zhe, and Chen, Minghua.
\newblock {Predicting Positive and Negative Links in Signed Social Networks by
  Transfer Learning}.
\newblock In \emph{Proc. of the 22nd International Conference on World Wide
  Web}, pp.\  1477--1488, 2013.

\bibitem[Zheng \& Skillicorn(2015)Zheng and Skillicorn]{SignedEmbedding15}
Zheng, Q and Skillicorn, D.B.
\newblock \emph{{Spectral Embedding of Signed Networks}}, chapter~7, pp.\
  55--63.
\newblock 2015.

\end{thebibliography}
\bibliographystyle{icml2016}

\clearpage
\appendix
\section{Proofs}
\label{ssec:proofs}
In this section, we indicate the probability of an event $e$ by writing $\Pr(e)$.
\begin{proof}[Proof of Fact~\ref{t:thresold_one_dimension}]\label{p:thresold_one_dimension}
Since the proofs for trollness and unpleasantness are equal, we will prove the statement solely for trollness. 
Without loss of generality, let $\that(i_1) \le \that(i_2) \le \ldots \le \that(i_z)$, where $z$ is the size of training set.

Let $\mhat(T)$ be the function equal to the sum of the two following quantities: 
\begin{itemize}[nosep,leftmargin=*]

\item the number of negative edges outgoing from a node $i \in V$ such that $\that(i) \le T$ and 

\item the number of positive edges outgoing from a node $j \in V$ such that $\that(j) > T$. 

\end{itemize}

The function $\mhat(\cdot)$ represents the number of mistakes made by using a separator for the training set which has only one feature, i.e. a threshold. We will show now that the function $\mhat(T)$ can be always minimised selecting the threshold $T=\frac{1}{2}$ for all training sets and all input labeled graphs. 

Let $h(i_k)$ denote the value of $\mhat(T)$ when $\that(i_{k-1}) < T < \that(i_{k})$.  Note that $\mhat(\cdot)$ is constant on such ranges.  Now, $h(i_{k+1}) = h(i_k) + \dhat_{\mathrm{out}}^-(i_k) - \dhat_{\mathrm{out}}^+(i_k)$. Hence $\mhat(\cdot)$ increases at $T=\that(i_{k})$ if and only if $\dhat_{\mathrm{out}}^-(i_k) > \dhat_{\mathrm{out}}^+(i_k)$, it decreases if and only if $\dhat_{\mathrm{out}}^-(i_k) < \dhat_{\mathrm{out}}^+(i_k)$, and it remains constant if and only if $\dhat_{\mathrm{out}}^-(i_k) = \dhat_{\mathrm{out}}^+(i_k)$. We can then conclude that $\mhat(\cdot)$, as $T$ increases in $[0,1]$, cannot increase and then decrease,  because an increase requires 
$\dhat_{\mathrm{out}}^-(i_k) > \dhat_{\mathrm{out}}^+(i_k)$, which gives $\that(i_k)=\frac{\dhat_{\mathrm{out}}^-(i_k)}{\dhat_{\mathrm{out}}(i_k)} > \frac{1}{2}$, while a decrease requires $\dhat_{\mathrm{out}}^-(i_k) < \dhat_{\mathrm{out}}^+(i_k)$, which gives $\that(i_k)=\frac{\dhat_{\mathrm{out}}^-(i_k)}{\dhat_{\mathrm{out}}(i_k)} < \frac{1}{2}$. 
Hence the function $\mhat(\cdot)$ has always a global minimum point at $\frac{1}{2}$.
 
\end{proof}

\begin{proof}[Proof of Theorem~\ref{t:active_lb}] \label{p:active_lb}[{\em Sketch}]

Without loss of generality we will focus on negative edges. We also assume $Q \neq |E|$. Let be $\mathcal{Y}_K$ be the set of all labelings such that the total number of negative and positive edges are respectively $K$ 
and $|E|-K$. Consider the following randomized strategy to assign a labeling to the edge set of the input graph: we select uniformly at random a labeling $\by$ in $\mathcal{Y}_K$. For each node $i \in V$, we have $\Psi(i) \le d_{\mathrm{out}}^-(i)$, which implies that the statement constraint $\Psi(\by) \le K$ is satisfied. Let $z$ be the number of negative edges present in the test set. 

Assume now that $A$ knows the value of $K$. Observe that, because of the above described randomized strategy used for selecting $\by$, the probability that a test edge is labeled negatively or positively does not depend on the choice of the $Q$ queries made by $A$. However, it is important to take into account that $A$, knowing $K$, knows also $z$ observing the queried labels. This fact is important when $\bE z$ is not an integer. We will now clarify this point with a very simple example. Consider for instance the case in which $|E|=2$, $K=1$ and $Q=1$. In this simple case we have $\bE z=\frac{1}{2}$, independently of the algorithm $A$. However $A$, knowing the value of $K$, will not make any mistake since the information provided in the selection phase will reveal the label of the test edge. More precisely the sign of the test label will be the opposite of the one of the label queried and observed.

In general we have  
$\floor*{\frac{K}{|E|}(|E|-Q)} 
\le \bE z 
\le \ceil*{\frac{K}{|E|}(|E|-Q)}$. 
Hence, for each test edge $i \rightarrow j$ we have that both $ \Pr(y_{i \rightarrow j} = -1)$ and $ \Pr(y_{i \rightarrow j} = +1)$ are lower bounded by $\frac{\floor*{\frac{K}{|E|}(|E|-Q)}}{|E|-Q}$. 

We can therefore conclude that the number of expected mistakes that any algorithm $A$ makes using the above mentioned labeling strategy is never smaller than $\floor*{\frac{K}{|E|}(|E|-Q)} > \frac{K}{|E|}(|E|-Q)-1$.
\end{proof}

\begin{proof}[Proof of Theorem~\ref{t:active_one}] \label{p:active_one} $\alcone$ asks, for each node $i \in V$, the label
$y_{i\rightarrow j}$ of {\em one} edge picked uniformly at random in
$E_{\mathrm{out}}(i)$. The number of labels queried is therefore not larger that $|V|$.
If the label $y_{i\rightarrow j}$ queried for node $i$ is equal to
$y^{\mathrm{min}}_{\mathrm{out}}(i)$, then $\alcone$ makes not more than $d_{\mathrm{out}}(i) - \Psi(i)$
mistakes while predicting the labels of the edges in $E_{\mathrm{out}}(i)$. If instead
$y_{i \rightarrow j} \neq y^{\mathrm{min}}_{\mathrm{out}}(i)$, then the number of mistakes made
on the edge subset $E_{\mathrm{out}}(i)$ is upper bounded by $\Psi(i)$. We have $\Pr \left( y_{i
\rightarrow j} = y^{\mathrm{min}}_{\mathrm{out}}(i) \right)=\frac{\Psi(i)}{d_{\mathrm{out}}(i)}$ and $\Pr
\left( y_{i \rightarrow j} \neq y^{\mathrm{min}}_{\mathrm{out}}(i) \right) =
1-\frac{\Psi(i)}{d_{\mathrm{out}}(i)}$.  In order to simplify the notation, we set now
$P(i)$ equal to $\Pr \left( y_{i \rightarrow j} = y^{\mathrm{min}}_{\mathrm{out}}(i)\right)$
where $j$ is selected uniformly at random in $E_{\mathrm{out}}(i)$ by $\alcone$. The
expected number of mistakes made by $\alcone$ over the random choice of the
query set is then bounded by \begin{align*} \bE m_{\alcone} &=\sum_{i \in V}
\Bigl( P(i) \left(d_{\mathrm{out}}(i) - \Psi(i)\right)+ \left(1-P(i)\right) \Psi(i)
\Bigr) \\ &=\sum_{i \in V} \Bigl( P(i) d_{\mathrm{out}}(i) + \Psi(i)-2P(i)\Psi(i) \Bigr)
  \\ &=\sum_{i \in V} \Biggl(
2\Psi(i)\biggl(1-\frac{\Psi(i)}{d_{\mathrm{out}}(i)}\biggr) \Biggr) \\ &\le 2 \sum_{i
\in V}  \Psi(i) = 2 \Psi\ .  \end{align*} \end{proof}

Observe that the tightness of the above bound depends on the quantities
$\frac{1}{2}-P(i)$ for all $i \in V$. In fact, if these quantities are close to
$0$, then the quantities $1-\frac{\Psi(i)}{d_{\mathrm{out}}(i)}$, bounded above for all
$i \in V$ in the second to last equality, are close to $1$. If most of them are instead close to $\frac{1}{2}$, then $1-\frac{\Psi(i)}{d_{\mathrm{out}}(i)}$ becomes close to $\frac{1}{2}$, making therefore the expected total number of mistakes close to $\Psi$, i.e. close to the expression of the lower bound
provided in Theorem~1.

\begin{proof}[Proof of Theorem~\ref{t:active_log}]\label{p:active_log}
We will use the Chernoff bound to quantify to what extent we can
  estimate in a correct way $y^{\mathrm{min}}_{\mathrm{out}}(i)$ for each node $i \in V$ with
  $\lge$ queries in $E_{\mathrm{out}}(i)$ and we will bound the expected number of
  mistakes made by our prediction rule. 

In order to simplify the notation, for each node $i \in V$ we set now $P(i)$
equal to $\Pr \left( y_{i \rightarrow j} =
y^{\mathrm{min}}_{\mathrm{out}}(i)\right)=\frac{\Psi_{\mathrm{out}}(i)}{d_{\mathrm{out}}(i)}$ where, in
the selection phase, $j$ is chosen uniformly at random in $E_{\mathrm{out}}(i)$ by
$\alclog$. For each node $i \in V$, let $s(i)$ be the total number of labels
observed querying the ones assigned to the edges in $E_{\mathrm{out}}(i)$. Let $s_{\min}(i)$ be the
number of labels observed that are equal to $y^{\mathrm{min}}_{\mathrm{out}}(i)$. 

We predict the non-queried labels of $E_{\mathrm{out}}(i)$ with $y^{\mathrm{min}}_{\mathrm{out}}(i)$
if\footnote{In this analysis we consider the case $s_{\min}(i) = \frac{s(i)}{2}$
when $s$ is even according to the worst possible label choice for the
prediction phase. In this case the label selected to predict the non-queried
ones in $E_{\mathrm{out}}(i)$ is $y^{\mathrm{min}}_{\mathrm{out}}(i)$, i.e. the one maximising the number
of prediction mistakes.} $s_{\min}(i) \ge \frac{s(i)}{2}$. Observe that only if
$s_{\min}(i) \ge \frac{s(i)}{2}$ the algorithm can make $d_{\mathrm{out}}-\Psi(i)$
mistakes, which is the worst prediction case for each node $i \in V$. 

Let $\delta(i)$ be defined as $\frac{1}{2P(i)}-1$ and set $z$ equal to
$\frac{1}{2}-P(i)$. Recall that $s(i)$ is equal to $\lge$. 
Let $\mathcal{E}$ the event $s_{\min}(i)\ge \frac{s(i)}{2}$.
Using the Chernoff
bound, for any $\delta(i) > 0$ we have

\begin{align*} \Pr (\mathcal{E}) &= \Pr \Big(s_{\min}(i)\ge
(1+\delta(i))P(i)s(i)\Big) \\ &\le
\exp\left(-\frac{\delta(i)^2}{2+\delta(i)}P(i)s(i)\right) \\ &=
\exp\left(-\frac{\left(\frac{1}{2P(i)}-1\right)^2}{2+\left(\frac{1}{2P(i)}-1\right)}P(i)s(i)\right)
\\ &= \exp\left(-\frac{(1-2P(i))^2}{2(1+2P(i))}s(i) \right) \\ &=
\exp\left(-\frac{(1-2P(i))^2}{2(1+2P(i))}\lge \right) \\ &\le
\exp\left(-(1-2P(i))^2\lgeone \right) \\ &= \exp\left(-4z^2\lgeone \right) \\
                                         &\le \exp\left(-4z^2(\log (d_{\mathrm{out}}(i)+1))
                                         \right) \\ &= (d_{\mathrm{out}}(i)+1)^{-4z^2}\ .
                                         \end{align*}

Let now $m^{(i)}$ be the number of mistakes made on the labels of the edges
contained in $E_{\mathrm{out}}(i)$.
Observe that when $z=\frac{1}{2}$ we have $\Psi(i)=0$, which implies $m^{(i)}=0$. Taking into consideration that
$d_{\mathrm{out}}(i)-\Psi(i)=\Psi(i)+2z d_{\mathrm{out}}(i)$, which implies $d_{\mathrm{out}}(i)=\frac{2\Psi(i)}{1-2z}$ for $z\neq \frac{1}{2}$, we can then exploit the result obtained above
using the Chernoff bound to write, for all $z \in [0,\frac{1}{2})$,

\begin{align*} \bE m^{(i)} &
\le \left(1-(d_{\mathrm{out}}(i)+1)^{-4z^2}\right)\Psi(i)+\\
&(d_{\mathrm{out}}(i)+1)^{-4z^2}(d_{\mathrm{out}}(i)-\Psi(i))\\
&= \left(1-(d_{\mathrm{out}}(i)+1)^{-4z^2}\right)\Psi(i)+\\
&(d_{\mathrm{out}}(i)+1)^{-4z^2}(\Psi(i)+2z d_{\mathrm{out}}(i))\\
&=\Psi(i)+2zd_{\mathrm{out}}(i)(d_{\mathrm{out}}(i)+1)^{-4z^2}\\
&\le \Psi(i)+2z(d_{\mathrm{out}}(i)+1)^{1-4z^2}\\
&=\Psi(i)+2z\left(\frac{2\Psi(i)}{1-2z}+1\right)^{1-4z^2}\ .
\end{align*}

We will now find an upper bound of the quantity $2z\left(\frac{2\Psi(i)}{1-2z}+1\right)^{1-4z^2}$ for all $z \in [0,\frac{1}{2})$. Notice that when $z$ is close to $0$ we could simplify the analysis taking advantage of the fact that $\frac{2\Psi(i)}{1-2z}=\Theta\left(\Psi(i)\right)$. On the other hand, when $z$ is close to $\frac{1}{2}$, the multiplicative term $2z$ can be bounded by $1$. In order to exploit both these facts, we will split the analysis into two cases: $z \in [0,\frac{1}{4})$ and $z \in [\frac{1}{4},\frac{1}{2})$.

\textbf{Case $z \in [0,\frac{1}{4})$:} We can bound $\frac{2\Psi(i)}{1-2z}$ by $4\Psi(i)$. We have therefore $\bE m^{(i)}\le \Psi(i) + 2z\left(4\Psi(i)+1\right)^{1-4z^2}$. This upper bound is maximised when $z=\left(8\log(4\Psi(i)+1)\right)^{-\frac{1}{2}}$ which is included in $[0,\frac{1}{4})$ for all $\Psi(i) \ge 2$. Bounding the mistake expression $2z\left(4\Psi(i)+1\right)^{1-4z^2}$ with $2z\left(4\Psi(i)+1\right)$ and plugging the $z$'s value $\left(8\log(4\Psi(i)+1)\right)^{-\frac{1}{2}}$ into this upper bound allows us to write 
$\bE m^{(i)}\le \Psi(i) + \frac{8\Psi(i)+2}{\sqrt{8\log(4\Psi(i)+1)}}$\ .
Finally, this bound for $\bE m^{(i)}$ holds even when $\Psi(i)=1$. In fact, setting $\Psi(i)=1$, the original expression term $2z\left(4\Psi(i)+1\right)^{1-4z^2}$ becomes equal to $2z\left(5\right)^{1-4z^2}$ and the upper bound term $\frac{8\Psi(i)+2}{\sqrt{8\log(4\Psi(i)+1)}}$ above obtained becomes equal to $\frac{10}{\sqrt{8\log(5)}}$. It is immediate to verify that $2z5^{1-4z^2} < \frac{10}{\sqrt{8\log(5)}}$ for all $z \in [0, \frac{1}{4})$.

\textbf{Case $z \in [\frac{1}{4},\frac{1}{2})$:} Since we can bound the multiplicative term $2z$ by $1$, we can write $\bE m^{(i)}\le \Psi(i)+\left(\frac{2\Psi(i)}{1-2z}+1\right)^{1-4z^2}$\ . For all $\Psi \ge 1$, in the range $z \in [\frac{1}{4},\frac{1}{2})$ the quantity $\left(\frac{2\Psi(i)}{1-2z}+1\right)^{1-4z^2}$ is always decreasing and it is therefore maximised when $z=\frac{1}{4}$. Hence we have $\bE m^{(i)}\le \Psi(i)+(4\Psi(i)+1)^{\frac{3}{4}}$\ .

\bigskip

We can now compare the two expressions obtained analysing these cases. Bounding again from above the expression $\Psi(i) + \frac{8\Psi(i)+2}{\sqrt{8\log(4\Psi(i)+1)}}$ obtained for the case $z \in [0,\frac{1}{4})$ with $\Psi(i) + \frac{8\Psi(i)+2}{\sqrt{\log(4\Psi(i)+1)}}$ we have that, for all $\Psi(i)>0$, the mistake expression $\Psi(i)+(4\Psi(i)+1)^{\frac{3}{4}}$ obtained for the case $z \in [\frac{1}{4},\frac{1}{2})$ is smaller than $\Psi(i) + \frac{8\Psi(i)+2}{\sqrt{\log(4\Psi(i)+1)}}$ obtained for the case $z \in [0,\frac{1}{4})$. Hence we finally have, for all $\Psi(i)>0$,

\begin{align*} \bE m^{(i)} &
\le \Psi(i) + \frac{8\Psi(i)+2}{\sqrt{\log(4\Psi(i)+1)}} \\
&\le \Psi(i) + \frac{10\Psi(i)}{\sqrt{\log(4\Psi(i)+1)}}\ .
\end{align*}

We will now use the last inequality to write an interpretable bound for $m_{\alclog}$, exploiting Jensen inequality and the concavity of the mistake function. 

Since $\Psi(i) + \frac{10\Psi(i)}{\sqrt{\log(4\Psi(i)+1)}}$ is concave for all $\Psi(i)>0$ and recalling that $\bE m^{(i)}=0$ for all $i$ such that $\Psi(i)=0$, we can conclude that, for all $\Psi(\by)>0$, 

$$\bE m_{\alclog} = 
\sum_{i \in V} \bE m^{(i)} \le 
\Psi(\by)+\mathcal{O}
\left(\frac{\Psi(\by)}{\sqrt{\log\left(4\overline{\Psi}_{0}(\by)+1\right)}}\right)
$$

\bigskip

Finally, the total number of distinct edge labels queried is bounded by
$\sum_{i \in V} \lge \le \sum_{i \in V} 4(\log(d_{\mathrm{out}}(i)+1)+1) 
\le 4|V|+4\sum_{i\in V} \log(d_{\mathrm{out}}(i)+1)$ which, using Jensen inequality together with the concavity of the logarithm function and taking into
account that $\sum_{i \in V} (d_{\mathrm{out}}(i)+1)=|E|+|V|$, can be in turn bounded by
$4|V|+4|V|\log\frac{|E|+|V|}{|V|}$. This expression is equal to $\mathcal{O}\left( |V|\log\left(\frac{|E|}{|V|}+1\right)
\right)$ when $|E|=\Omega(|V|)$\ .
\end{proof}

\section{Additional psychological evidences}

In this section, we elaborate on the psychological findings that support our
approach and its good experimental results on real dataset.
First we emphasize again the importance of the cognitive dissonance theory.
More than half a century after its inception, it is still considered as one of
the fundamental motivations explaining many behaviors, notably because it is a
handy mechanism to 
discover errors in one's system of beliefs \cite{Gawronski2012}. Recently,
this theory have been completed by the action-based model, which provides deeper explanations on the
origin of this phenomenon. Namely, this model posits that \enquote{perceptions
and cognitions serve to activate action tendencies with little or no conscious
deliberation}~\cite{Harmon09}. Dissonance is the result of cognitions being in
conflict, and the organism try to reduce this negative effect by changing
beliefs of attitudes so that all cognitions agree with each others.  This
process is believed to lead in most cases to benefices for the organism,
especially by making the cognitive processes more efficient.

In a classic experiment~\cite{Carlsmith59}, participants were asked to perform
a tedious task (generating a strong negative attitude) and then offered a
monetary incentive of either 8 or 160 dollars\footnote{in present day terms} to
lie to others, pretending the task is actually interesting. At the end of the
experiment, participants were asked to rate the task and they faced some of the
three following facts:
\begin{enumerate}[nosep,leftmargin=*]
  \item The task was tedious
  \item I said to someone else that it is not
  \item I was paid low amount of money to lie (OR)
  \item I was paid high amount of money to lie
\end{enumerate}
Whereas 1. and 2. are in conflict, 4. provides a sufficient justification,
therefore those who receive high amount of money do not experience dissonance.
On the other hand, condition 3. creates dissonance, which is solved by rating
the task more positively than the group in condition 4.

A more direct implication of dissonance avoidance in our context is called the
\enquote{just world belief}. In simple words, people holding such belief assume
that good things happen to good persons and bad things to bad persons. The more
they are convinced of that, the more they tend to blame innocent victims for
their misfortune. That is indeed one way to resolve the inconsistency created by
observing a good, innocent person being a victim (i.e. something bad happening to
her)~\cite{VandenBos09}. In our case, this would justify sending a negative
link to someone who has received a lot of them, no matter whether they are
really nefarious member of the community or not.
On the other hand, the related question of why trolls (i.e. users with a
majority of negative outgoing edges) sometimes output positive edges, cannot be
answered by invoking cognitive dissonance. Indeed dissonance only occurs when
cognitions are in conflict about a single dimension or attribute of an object.
We must thus conclude that some signs are explained by exogenous factors.

Reciprocal edges often having the same sign is also a well documented
psychological effect: reciprocation is \enquote{one of the strongest and most
pervasive social forces in all human cultures, and it helps us build trust with
others and pushes us toward equity in our relationships}~\cite{reciprocity04}.
This natural impulse can be related with the ageless law of talion and its
modern game theory counterpart of \enquote{tit for that}.

The action-based model is consistent with studies of neural activity showing
that dealing with cognitive conflicts increases skin conductance, indicating
arousal of the nervous system~\cite{Hajcak04}. It also activate the anterior
cingulate cortex and the level of activity predicts how much opinions will
change in order to resolve conflicts~\cite{vanVeen2009,Sharot2009}.

The preference for consistency (\pfc{}) that arises from the minimization of
cognitive dissonance can be quantified through a questionnaire on a scale from
1 to 9 and is a good predictor of certain type of behaviors~\cite{PFC95}. A
surrogate of this feature for our users is related to the individual complexity
measure $\Psi_{\mathrm{out}}(i)$ that we defined in
\autoref{sub:complexity_measures}. A user with none or all of his outgoing
edges of the same sign will have a \pfc{} of $0.5$ whereas a user with half
of his outgoing edges positive will be the less consistent with a \pfc{} of
$0$.  On \autoref{fig:pfc} we see that indeed in \wik{} and \epi{}, 75\% of the
nodes with at least one outgoing edges have a consistency larger than $0.4$
even though the situation is more nuanced in \sla{}.

\begin{figure}[]
  \centering
  \begin{subfigure}[b]{1.0\columnwidth}
    \centering
    \includegraphics[width=.9\textwidth]{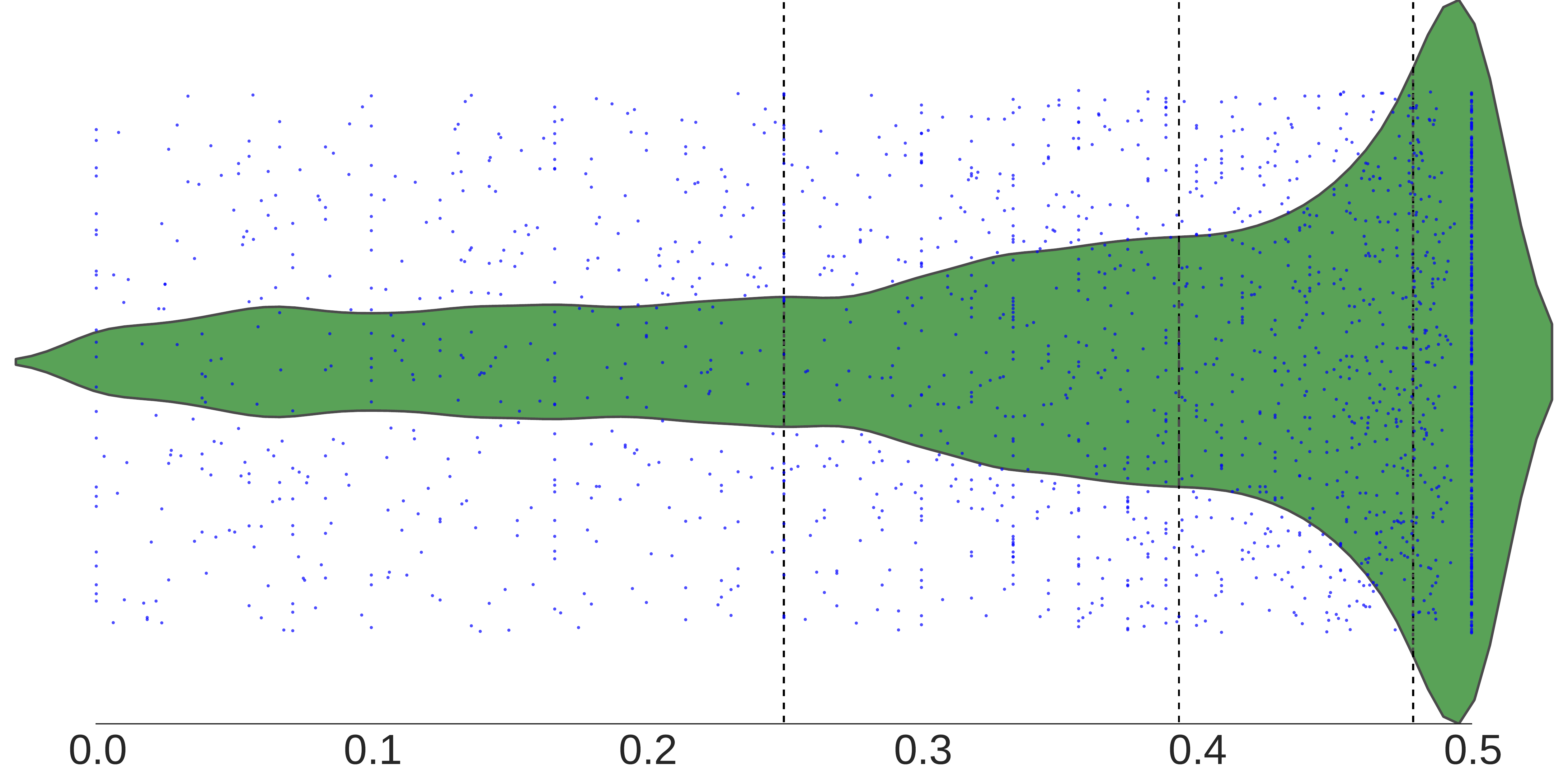}
    \caption{\wik{} \label{fig:wik}}
  \end{subfigure}
  \begin{subfigure}[b]{1.0\columnwidth}
    \centering
    \includegraphics[width=.9\textwidth]{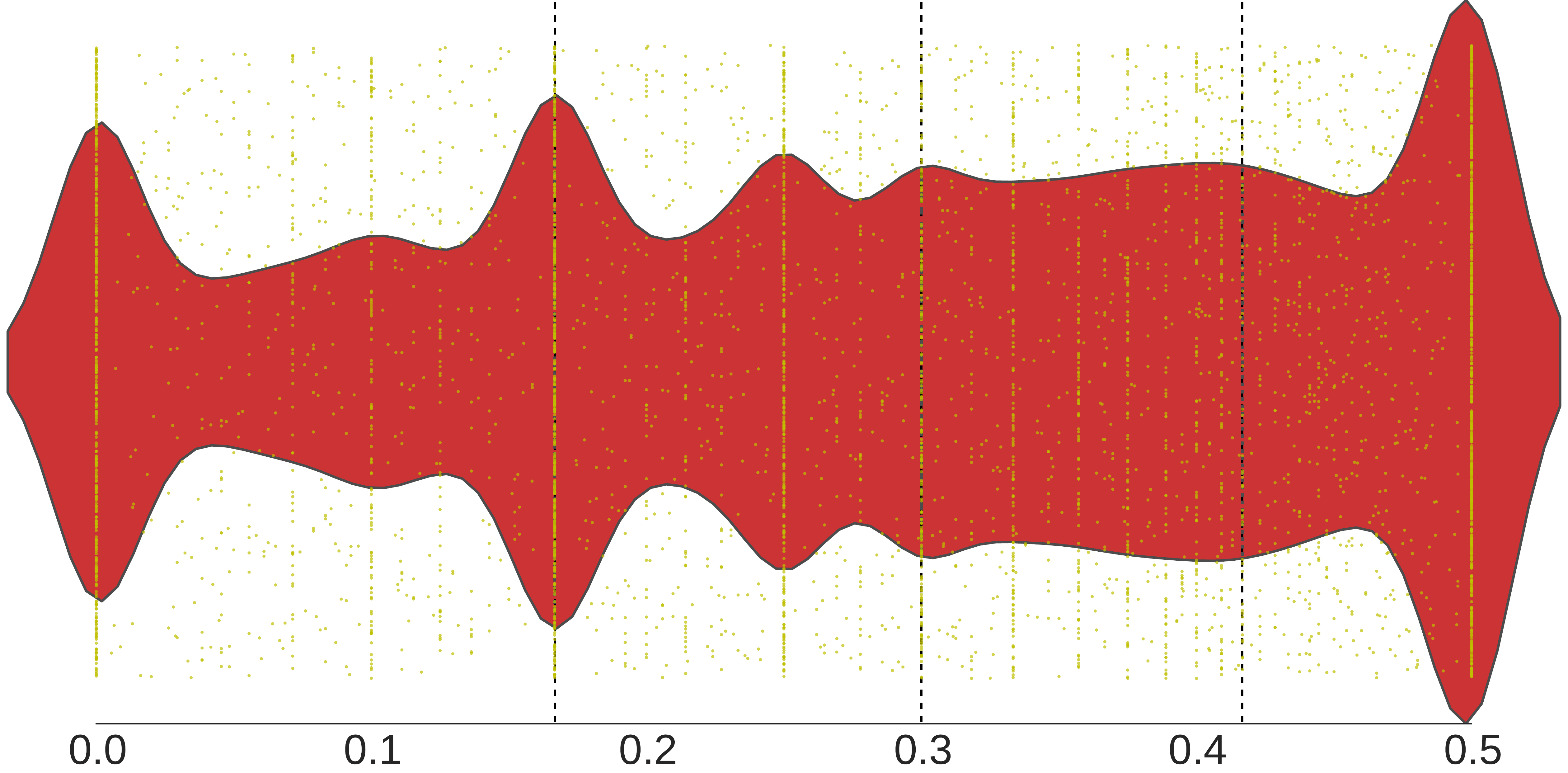}
    \caption{\sla{} \label{fig:sla}}
  \end{subfigure}
  \begin{subfigure}[b]{1.0\columnwidth}
    \centering
    \includegraphics[width=.9\textwidth]{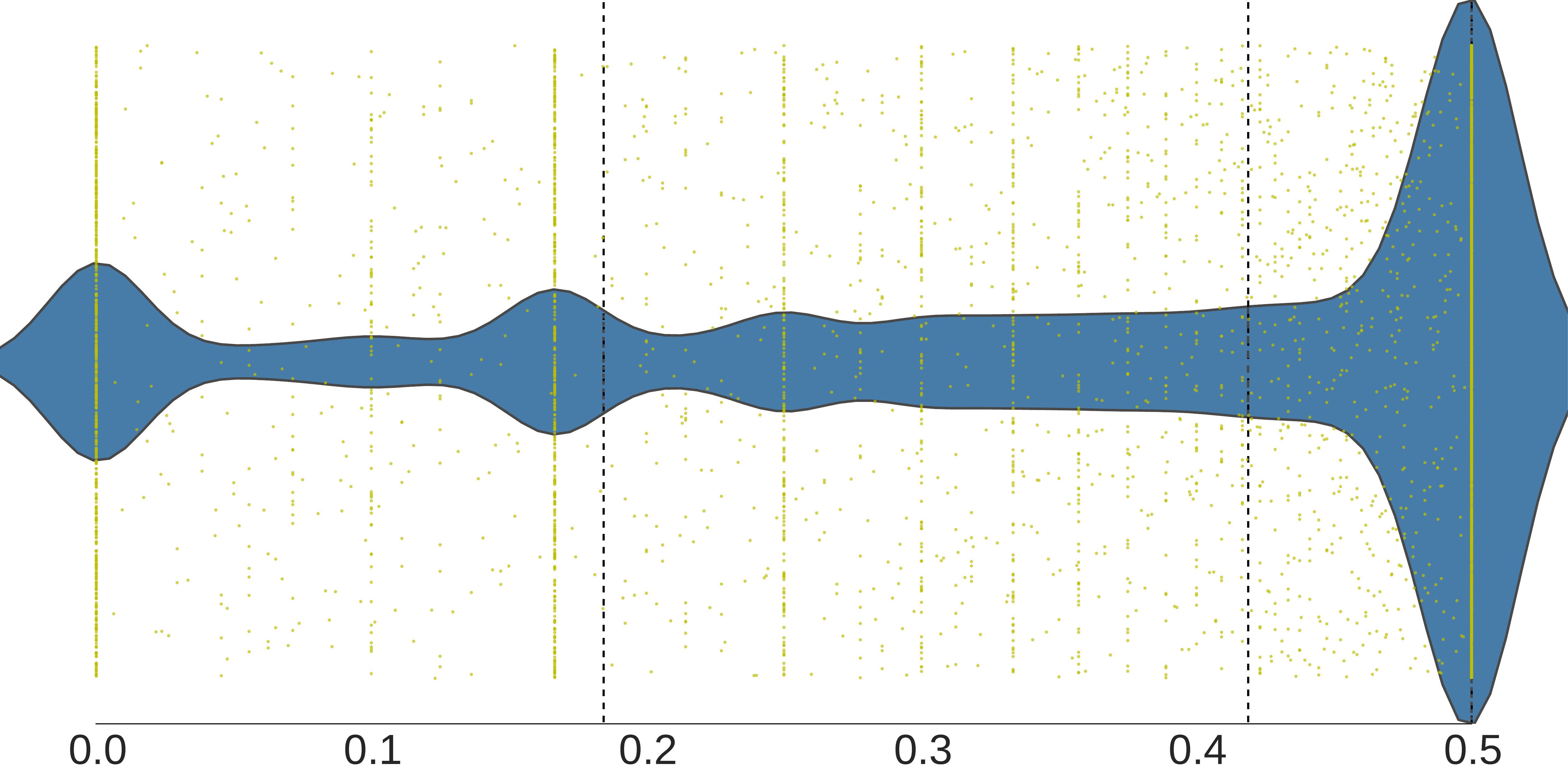}
    \caption{\epi{} \label{fig:epi}}
  \end{subfigure}
  \caption{Distribution of nodes $i$ according to a surrogate of their
  preference for consistency $\nicefrac{1}{2} - \Psi_{\mathrm{out}}(i) =
\left|\nicefrac{1}{2} - t(i)\right|$. Black dashed lines are first, second and
third quartiles from left to right. Each dot is a node and they are scattered
along the vertical dimension to enhance figure legibility. \label{fig:pfc}}
\end{figure}

As people get older, they have witnessed more emotional upset and therefore
have further motivations to reduce emotional instability. A study of 269
individuals between 18 and over 80 years indeed showed a positive relation
between age and high preference for consistency~\cite{agePFC05}. Although we do
not have demographic information about users in our data, this nonetheless
suggest that younger users may have higher $\Psi_{\mathrm{out}}$ and therefore
require more careful estimation of their trollness. However, increasing the performance by driving
the query phase based on side information is outside the scope of this paper
as it would increase the complexity.  
Just as cognitive dissonance affects people differently depending of their age,
the intensity of its effect also depends of cultural background~\cite{east97}.

Another long-discovered psychological effect at play in our situation is the
fact that public commitment tend to be more stronger and more persistent than
those made in private~\cite{publicCom89,publicCialdini98}. It turns out that in
most \ssn{}, the status of a relationship is on display, at least to the other
connected user. This publicity provides further incentive to behave
consistently with one past attitude.

Most of the psychological works cited so far performed their experiments
offline in a lab room yet we apply our method to datasets obtained from
online communities. Hence it is legitimate to wonder whether we can transfer
the conclusions from one domain to another, especially since people tend to be
less inhibited in their online interactions~\cite{Suler04} and since \enquote{
Facebook users tend to be more extroverted and narcissistic, but less
conscientious and socially lonely, than nonusers}~\cite{facebookUser11}.

Fortunately, most people keep their personality when going online, as
evidenced by the fact that online surveys provide similar results to
traditional pen-and-pencil ones~\cite{onlineSurvey04, onlineSurvey07}
including faking answers~\cite{Cyberfaking13}.
Online surveys even \enquote{elicit greater levels of sensitive item self
disclosure}~\cite{onlineDisclosure12}, thereby providing more accurate picture
of respondents. Likewise, communications are mostly unchanged---for instance by
using emoticons, which \enquote{serve the same functions as actual nonverbal
behavior}~\cite{Emoticons08}---or enhanced through virtual games, which
\enquote{allow players to express themselves in ways they may not
feel comfortable doing in real life because of their appearance, gender,
sexuality, and/or age.}~\cite{MMO07}. Measuring the Big Five personality
factors of 122 student participants, \citet{sameOnline12} even concluded that
\enquote{the emotional stability increases on the computer and the Internet}.
Finally, online structures closely mirror offline one \cite{mirror15}.

\section{Additional results}
Full result  of accuracy (\autoref{tab:batch_postprocessing}) and MCC
(\autoref{tab:batch_postprocessing_mcc}) in the batch settings follow on the
next page, comparing our method (with and without reciprocal edges heuristic)
against competitors.
\begin{table*}[t]\centering
  \setlength{\tabcolsep}{3.5pt}
  \footnotesize
  \caption{Full numerical accuracy results in the batch settings.
    The starred version (${}^\star$) are the ones where we use the
    reciprocal edge heuristics described at the end of \autoref{sec:results}.
    While there is no significant difference in \wik{}, for the two others
    dataset where recriprocal edges fraction is respectively 8 and 15\%, the
    accuracy is up by 0.4 to 1.2\%, which makes our methods outperform all
    competitors in all cases.
  \label{tab:batch_postprocessing}}
  \begin{tabular}{lcc|cc|cc|ccc}
    \toprule
    $\frac{|E_{\mathrm{train}}|}{|E|}$ & \blc{}$(t)^\star$ & \blc{}$(t)$ & \blc{}$(t,u)^\star$ & \blc{}$(t,u)$ & \comppp{}$^\star$ & \comppp{} & \comptriads{} & \complowrank{} & \compasym{} \\
    \midrule
    & \multicolumn{9}{c}{\wik{}} \\
    \cmidrule{2-10}
    15\% & $82.97 \pm .12$  & $82.93 \pm .12$  & $83.68 \pm .26$  & $83.99 \pm .08$  & $83.28 \pm .15$  & $83.39 \pm .21$  & $80.73 \pm .28$  & $81.07 \pm .18$  & $78.50 \pm .25$ \\
    30\% & $84.42 \pm .14$  & $84.48 \pm .10$  & $85.48 \pm .14$  & $85.45 \pm .16$  & $85.07 \pm .14$  & $85.08 \pm .09$  & $82.27 \pm .22$  & $82.62 \pm .14$  & $80.89 \pm .23$ \\
    45\% & $84.89 \pm .14$  & $84.88 \pm .10$  & $86.11 \pm .14$  & $86.08 \pm .10$  & $85.77 \pm .14$  & $85.79 \pm .12$  & $83.43 \pm .21$  & $83.53 \pm .21$  & $81.81 \pm .16$ \\
    60\% & $84.97 \pm .11$  & $85.20 \pm .16$  & $86.43 \pm .18$  & $86.65 \pm .10$  & $86.13 \pm .16$  & $86.26 \pm .11$  & $84.33 \pm .18$  & $84.33 \pm .28$  & $82.14 \pm .17$ \\
    75\% & $85.06 \pm .15$  & $85.20 \pm .20$  & $86.62 \pm .18$  & $86.68 \pm .19$  & $86.38 \pm .18$  & $86.41 \pm .12$  & $85.07 \pm .22$  & $84.74 \pm .22$  & $82.50 \pm .26$ \\
    90\% & $85.30 \pm .34$  & $85.51 \pm .10$  & $86.90 \pm .30$  & $87.16 \pm .10$  & $86.68 \pm .33$  & $86.89 \pm .10$  & $85.49 \pm .39$  & $85.39 \pm .31$  & $82.69 \pm .26$ \\ 
    \midrule
    & \multicolumn{9}{c}{\sla{}} \\
    \cmidrule{2-10}
    15\% & $82.71 \pm .06$  & $82.50 \pm .02$  & $80.06 \pm .09$  & $79.66 \pm .20$  & $79.62 \pm .07$  & $79.12 \pm .09$  & $81.64 \pm .34$  & $81.20 \pm .10$  & $68.60 \pm .19$ \\
    30\% & $83.66 \pm .05$  & $83.24 \pm .02$  & $83.13 \pm .06$  & $82.50 \pm .04$  & $82.44 \pm .06$  & $81.70 \pm .01$  & $83.75 \pm .22$  & $82.54 \pm .23$  & $72.70 \pm .13$ \\
    45\% & $84.26 \pm .05$  & $83.67 \pm .05$  & $84.61 \pm .04$  & $83.81 \pm .07$  & $84.01 \pm .05$  & $83.08 \pm .07$  & $84.58 \pm .16$  & $83.69 \pm .24$  & $74.63 \pm .08$ \\
    60\% & $84.70 \pm .05$  & $84.00 \pm .05$  & $85.56 \pm .07$  & $84.63 \pm .05$  & $84.98 \pm .08$  & $83.93 \pm .06$  & $85.06 \pm .12$  & $84.64 \pm .16$  & $75.79 \pm .13$ \\
    75\% & $85.05 \pm .09$  & $84.13 \pm .08$  & $86.21 \pm .07$  & $85.11 \pm .11$  & $85.70 \pm .08$  & $84.41 \pm .07$  & $85.48 \pm .09$  & $85.24 \pm .21$  & $76.64 \pm .10$ \\
    90\% & $85.36 \pm .16$  & $84.28 \pm .19$  & $86.72 \pm .16$  & $85.45 \pm .22$  & $86.21 \pm .18$  & $84.82 \pm .19$  & $85.83 \pm .16$  & $85.57 \pm .17$  & $77.16 \pm .24$ \\
    \midrule
    & \multicolumn{9}{c}{\epi{}} \\
    \cmidrule{2-10}
    15\% & $89.19 \pm .05$  & $88.91 \pm .03$  & $89.05 \pm .06$  & $88.70 \pm .05$  & $88.70 \pm .05$  & $88.26 \pm .07$  & $88.30 \pm .52$  & $88.50 \pm .08$  & $79.56 \pm 0.18$ \\
    30\% & $89.76 \pm .04$  & $89.27 \pm .02$  & $91.07 \pm .05$  & $90.67 \pm .04$  & $90.75 \pm .05$  & $90.19 \pm .06$  & $90.20 \pm .68$  & $89.88 \pm .10$  & $79.64 \pm 1.07$ \\
    45\% & $90.14 \pm .03$  & $89.42 \pm .02$  & $92.01 \pm .05$  & $91.50 \pm .04$  & $91.71 \pm .04$  & $90.93 \pm .02$  & $90.85 \pm .56$  & $90.52 \pm .09$  & $61.06 \pm 3.24$ \\
    60\% & $90.41 \pm .03$  & $89.46 \pm .02$  & $92.53 \pm .04$  & $91.93 \pm .03$  & $92.34 \pm .03$  & $91.37 \pm .06$  & $91.57 \pm .45$  & $90.78 \pm .08$  & $64.92 \pm 3.79$ \\
    75\% & $90.69 \pm .05$  & $89.57 \pm .03$  & $92.94 \pm .05$  & $92.28 \pm .05$  & $92.84 \pm .05$  & $91.73 \pm .03$  & $91.75 \pm .27$  & $90.90 \pm .07$  & $59.45 \pm 3.01$ \\
    90\% & $90.88 \pm .09$  & $89.57 \pm .02$  & $93.25 \pm .06$  & $92.49 \pm .02$  & $93.19 \pm .07$  & $91.92 \pm .04$  & $91.93 \pm .19$  & $90.85 \pm .11$  & $52.52 \pm 0.56$ \\
    \bottomrule
  \end{tabular}
\end{table*}

\begin{table*}[b]\centering
  \setlength{\tabcolsep}{3.3pt}
  \footnotesize
  \caption{Full numerical MCC results in the batch settings, where the same
    conclusion holds to an even larger extent (almost up to 4\% on the 90\%
    sample of \epi{})
  \label{tab:batch_postprocessing_mcc}}
  \begin{tabular}{lcc|cc|cc|ccc}
    \toprule
    $\frac{|E_{\mathrm{train}}|}{|E|}$ & \blc{}$(t)^\star$ & \blc{}$(t)$ & \blc{}$(t,u)^\star$ & \blc{}$(t,u)$ & \comppp{}$^\star$ & \comppp{} & \comptriads{} & \complowrank{} & \compasym{} \\
    \midrule
    & \multicolumn{9}{c}{\wik{}} \\
    \cmidrule{2-10}
    15\% & $43.43 \pm .44$  & $43.45 \pm .42$  & $49.65 \pm .39$  & $49.58 \pm .60$  & $50.66 \pm .47$  & $50.88 \pm .39$  & $34.29 \pm .78$  & $35.31 \pm .68$  & $33.95 \pm .85$ \\
    30\% & $48.23 \pm .47$  & $48.37 \pm .30$  & $54.49 \pm .65$  & $54.50 \pm .13$  & $55.52 \pm .37$  & $55.52 \pm .16$  & $42.80 \pm .76$  & $42.20 \pm .51$  & $39.21 \pm .59$ \\
    45\% & $49.81 \pm .45$  & $49.70 \pm .12$  & $56.49 \pm .47$  & $56.87 \pm .41$  & $57.51 \pm .38$  & $57.54 \pm .32$  & $47.78 \pm .63$  & $46.10 \pm .71$  & $41.49 \pm .41$ \\
    60\% & $49.95 \pm .50$  & $50.48 \pm .42$  & $57.54 \pm .54$  & $57.79 \pm .40$  & $58.39 \pm .47$  & $58.73 \pm .17$  & $50.96 \pm .69$  & $49.52 \pm .99$  & $42.60 \pm .57$ \\
    75\% & $50.46 \pm .44$  & $50.91 \pm .99$  & $58.37 \pm .56$  & $58.49 \pm .71$  & $59.16 \pm .52$  & $59.30 \pm .63$  & $53.75 \pm .72$  & $51.47 \pm .59$  & $43.96 \pm .59$ \\
    90\% & $50.96 \pm .99$  & $51.39 \pm .72$  & $58.89 \pm .81$  & $59.76 \pm .94$  & $59.84 \pm .97$  & $60.46 \pm .67$  & $54.98 \pm .99$  & $53.81 \pm .98$  & $44.42 \pm .82$ \\
    \midrule
    & \multicolumn{9}{c}{\sla{}} \\
    \cmidrule{2-10}
    15\% & $48.61 \pm .20$  & $48.03 \pm .17$  & $46.34 \pm .23$  & $45.51 \pm .39$  & $47.34 \pm .17$  & $46.44 \pm .09$  & $45.76 \pm .68$  & $41.33 \pm .35$  & $20.54 \pm .43$ \\
    30\% & $51.11 \pm .17$  & $49.87 \pm .05$  & $52.24 \pm .26$  & $50.53 \pm .23$  & $53.01 \pm .13$  & $51.44 \pm .08$  & $52.10 \pm .60$  & $46.67 \pm .78$  & $25.92 \pm .38$ \\
    45\% & $52.74 \pm .18$  & $50.91 \pm .08$  & $55.53 \pm .25$  & $53.36 \pm .18$  & $56.34 \pm .11$  & $54.27 \pm .14$  & $54.64 \pm .38$  & $51.17 \pm .77$  & $29.13 \pm .27$ \\
    60\% & $53.90 \pm .18$  & $51.71 \pm .11$  & $57.72 \pm .30$  & $55.35 \pm .12$  & $58.54 \pm .21$  & $56.10 \pm .15$  & $56.19 \pm .30$  & $54.97 \pm .53$  & $31.41 \pm .31$ \\
    75\% & $54.87 \pm .19$  & $52.00 \pm .15$  & $59.48 \pm .27$  & $56.45 \pm .29$  & $60.21 \pm .22$  & $57.09 \pm .15$  & $57.43 \pm .21$  & $57.45 \pm .63$  & $33.17 \pm .35$ \\
    90\% & $55.71 \pm .48$  & $52.18 \pm .66$  & $60.88 \pm .47$  & $57.24 \pm .71$  & $61.41 \pm .49$  & $58.06 \pm .62$  & $58.44 \pm .46$  & $59.05 \pm .49$  & $34.56 \pm .60$ \\
    \midrule
    & \multicolumn{9}{c}{\epi{}} \\
    \cmidrule{2-10}
    15\% & $53.04 \pm .26$  & $52.14 \pm .21$  & $58.56 \pm .44$  & $57.63 \pm .53$  & $60.16 \pm .14$  & $59.20 \pm .12$  & $54.79 \pm 1.31$  & $48.19 \pm .47$  & $37.55 \pm 0.37$ \\
    30\% & $54.94 \pm .21$  & $53.08 \pm .11$  & $64.57 \pm .31$  & $63.32 \pm .27$  & $65.60 \pm .14$  & $64.14 \pm .11$  & $60.94 \pm 1.81$  & $57.50 \pm .45$  & $39.52 \pm 1.82$ \\
    45\% & $56.37 \pm .18$  & $53.67 \pm .04$  & $67.49 \pm .28$  & $65.76 \pm .16$  & $68.33 \pm .13$  & $66.14 \pm .06$  & $63.40 \pm 1.60$  & $62.25 \pm .39$  & $29.05 \pm 2.43$ \\
    60\% & $57.48 \pm .14$  & $53.84 \pm .25$  & $69.28 \pm .23$  & $67.23 \pm .13$  & $70.25 \pm .12$  & $67.46 \pm .19$  & $66.04 \pm 1.36$  & $65.07 \pm .36$  & $32.79 \pm 2.84$ \\
    75\% & $58.52 \pm .22$  & $54.21 \pm .23$  & $70.72 \pm .23$  & $68.37 \pm .16$  & $71.78 \pm .20$  & $68.56 \pm .20$  & $66.85 \pm 0.83$  & $67.02 \pm .30$  & $31.46 \pm 1.95$ \\
    90\% & $59.34 \pm .36$  & $54.19 \pm .09$  & $71.96 \pm .30$  & $69.18 \pm .15$  & $72.94 \pm .26$  & $69.04 \pm .06$  & $67.73 \pm 0.67$  & $68.47 \pm .41$  & $28.73 \pm 0.39$ \\
    \bottomrule
  \end{tabular}
\end{table*}

\todos
\end{document}